\newtheorem{theorem}{Theorem}
\newtheorem{lemma}[theorem]{Lemma}
\title{Bidirectional Bounded-Suboptimal Heuristic Search with Consistent Heuristics}
\newif\ifcomments
\definecolor{shahafcolor}{RGB}{255, 0, 0} 
\definecolor{dorcolor}{RGB}{53, 100, 0} 
\definecolor{arielcolor}{RGB}{0, 0, 255} 
\definecolor{nataliecolor}{RGB}{255, 165, 0} 
\definecolor{liorcolor}{RGB}{128, 0, 128} 
    \newcommand{\shahaf}[1]{\textcolor{shahafcolor}{[Shahaf: #1]}}
    \newcommand{\dor}[1]{\textcolor{dorcolor}{[Dor: #1]}}
    \newcommand{\ariel}[1]{\textcolor{arielcolor}{[Ariel: #1]}}
    \newcommand{\natalie}[1]{\textcolor{nataliecolor}{[Natalie: #1]}}
    \newcommand{\lior}[1]{\textcolor{liorcolor}{[Lior: #1]}}
    \newcommand{\shahaf}[1]{}
    \newcommand{\dor}[1]{}
    \newcommand{\ariel}[1]{}
    \newcommand{\natalie}[1]{}
    \newcommand{\lior}[1]{}
\author{
    Shahaf S. Shperberg\textsuperscript{\rm 1}, 
    Natalie Morad\textsuperscript{\rm 1},
    Lior Siag\textsuperscript{\rm 1},
    Ariel Felner\textsuperscript{\rm 1}, 
    Dor Atzmon\textsuperscript{\rm 2}
}
\newcommand\thickbar[1]{\accentset{\rule{.4em}{.6pt}}{#1}}
\newcommand{\openf}{\textsc{Open\textsubscript{F}}\xspace}      
\newcommand{\openb}{\textsc{Open\textsubscript{B}}\xspace}      
\newcommand{\gminf}{\mbox{$gMin_F$}\xspace}     
\newcommand{\fmin}{\mbox{$fMin$}\xspace}        
\newcommand{\fminf}{\mbox{$fMin_F$}\xspace}     
\newcommand{\fminb}{\mbox{$fMin_B$}\xspace}     
\newcommand{\fmind}{\mbox{$fMin_D$}\xspace}     
\newcommand{\prW}{\mbox{$pr_W$}\xspace}
\newcommand{\prWD}{\mbox{$pr_{W_D}$}\xspace}
\newcommand{\prWF}{\mbox{$pr_{W_F}$}\xspace}
\newcommand{\prWminf}{\mbox{$pr_WMin_F$}\xspace}
\newcommand{\prWminb}{\mbox{$pr_WMin_B$}\xspace} 
\newcommand{\prWmind}{\mbox{$pr_WMin_D$}\xspace} 
\newcommand{\unihs}{UHS\xspace}              
\newcommand{\bihs}{BiHS\xspace}                
\newcommand{\astar}{A*\xspace}                  
\newcommand{\mm}{MM\xspace}                     
\newcommand{\wastar}{WA*\xspace}                
\newcommand{\wmm}{WMM\xspace}                   
\newcommand{\bwa}{WBiA$^*$\xspace}   
\newcommand{\WBSstar}{WBS$^*$\xspace}
\newcommand{\bae}{\mbox{BAE$^*$}\xspace}
\newcommand{\BAE}{\mbox{BAE$^*$}\xspace}
\newcommand{\bWae}{\mbox{WBAE$^*$}\xspace}
\newcommand{\BSstar}{\mbox{BS$^*$}\xspace}
\newcommand{\Cstar}{\ensuremath{C^*}}
\newcommand{\openF}{\textsc{Open\textsubscript{F}}\xspace}      
\newcommand{\openB}{\textsc{Open\textsubscript{B}}\xspace}      
\newcommand{\openD}{\textsc{Open\textsubscript{D}}\xspace}      
\newcommand{\opend}{\textsc{Open\textsubscript{D}}\xspace}      
\newcommand{\start}{\ensuremath{start}\xspace}
\newcommand{\goal}{\ensuremath{goal}\xspace}
\newcommand{\BiHS}{\mbox{BiHS}\xspace}     
\newcommand{\UniHS}{\mbox{UniHS}\xspace}   
\newcommand{\bss}{BSS\xspace}              
\newcommand{\MM}{\mbox{MM}\xspace}
\newcommand{\gD}{\mbox{$g_D$}}
\newcommand{\gF}{\mbox{$g_F$}}
\newcommand{\gB}{\mbox{$g_B$}}
\newcommand{\hD}{\mbox{$h_D$}}
\newcommand{\hF}{\mbox{$h_F$}}
\newcommand{\hB}{\mbox{$h_B$}}
\newcommand{\hO}{\mbox{$h_{\thickbar{D}}$}}
\newcommand{\fW}{\mbox{$f_W$}}
\newcommand{\fD}{\mbox{$f_D$}}
\newcommand{\fF}{\mbox{$f_F$}}
\newcommand{\prmin}{\mbox{$PrMin$}\xspace}
\newcommand{\dD}{\mbox{$d_D$}}
\newcommand{\dF}{\mbox{$d_F$}}
\newcommand{\dB}{\mbox{$d_B$}}
\newcommand{\bD}{\mbox{$b_D$}}
\newcommand{\bF}{\mbox{$b_F$}}
\newcommand{\bB}{\mbox{$b_B$}}
\newcommand{\bWD}{\mbox{$b_{W_D}$}}
\newcommand{\bWF}{\mbox{$b_{W_F}$}}
\newcommand{\bWB}{\mbox{$b_{W_B}$}}
\newcommand{\alb}{\mbox{ALB}\xspace}
\newcommand{\bminf}{\mbox{$bMin_F$}\xspace}     
\newcommand{\bminb}{\mbox{$bMin_B$}\xspace}     
\newcommand{\bmind}{\mbox{$bMin_D$}\xspace}
\newcommand{\bWminf}{\mbox{$b_WMin_F$}\xspace}     
\newcommand{\bWminb}{\mbox{$b_WMin_B$}\xspace}     
\newcommand{\bWmind}{\mbox{$b_WMin_D$}\xspace}
\newcommand{\defeq}{=}
\begin{document}

\maketitle

\begin{abstract}
Recent advancements in bidirectional heuristic search have yielded significant theoretical insights and novel algorithms.
While most previous work has concentrated on optimal search methods, this paper focuses on bounded-suboptimal bidirectional search, where a bound on the suboptimality of the solution cost is specified.
We build upon the state-of-the-art optimal bidirectional search algorithm, BAE*, designed for consistent heuristics, and introduce several variants of BAE* specifically tailored for the bounded-suboptimal context.
Through experimental evaluation, we compare the performance of these new variants against other bounded-suboptimal bidirectional algorithms as well as the standard weighted A* algorithm.
Our results demonstrate that each algorithm excels under distinct conditions, highlighting the strengths and weaknesses of each approach.
\end{abstract}

\begin{links}
    \link{Appendix \& Code}{https://github.com/SPL-BGU/Bounded-Suboptimal-BAE}
\end{links}

\section{Introduction}
In optimal search, the task is to find the least-cost path between two vertices, $start$ and $goal$, in a given graph.
Admissible \emph{Unidirectional Heuristic Search} (\unihs) algorithms like \astar~\citep{hartNR68Astar}, which prioritize nodes by $f(n) = g(n) + h(n)$, return an optimal solution~\citep{dechter} (of cost $C^*$) if $h$ is admissible (never overestimating).

In many cases, finding optimal solutions is infeasible due to the immense computation required.
\emph{Bounded-suboptimal search} (\bss) is a paradigm that trades the quality of the solution for faster running time.
In \bss, we are given a bound $W \geq 1$, and are required to find a solution with cost $ \leq W \cdot C^*$.
A classical \bss algorithm is \emph{Weighted A*} (\wastar)~ \citep{WASTR70}, which expands nodes $n$ according to $f(n)=g(n)+W\cdot h(n)$.
Given an admissible heuristic, \wastar is guaranteed to return a bounded-suboptimal solution.
Other \bss algorithms include \emph{Dynamic Potential Search} (DPS)~\citep{gilon2016dynamic}, Explicit Estimation Search (EES)~\citep{Thayer2011BoundedSS}, XDP and XUP~\citep{Chen19,Chen2021NecessaryAS}, and improved versions of these algorithms~\citep{Fickert0R22}.

\emph{Bidirectional heuristic search} (\bihs) is an alternative to \unihs that progresses simultaneously from $start$ (forward) and $goal$ (backward) until the two frontiers meet.
Recent line of work presented theoretical study and practical ways for using \bihs with tremendous achievements~\citep{barker2015limitations,MMAAAI16,eckerle2017sufficient,SahamFCS17,Chen2017FronttoEndBH,shperberg2019improving,shperberg2019enriching,ShperbergDFS21,alcazar20unifying,acazar21,siag2023comparing}.
Yet, while unidirectional \bss was broadly studied, bidirectional \bss has received limited attention.

\BAE~\citep{sadhukhan2013bidirectional} (and the identical DIBBS algorithm~\citep{SewellJ21}), is a \bihs optimal algorithm designed for consistent heuristics.
\BAE prioritizes nodes in the open lists according to $b(n)=g(n)+h(n)+d(n)$ with $d(n)$ representing a lower bound on the heuristic error of paths discovered on the opposite search that passes through $n$.
\bae is considered the state-of-the-art optimal \bihs algorithm, with several studies highlighting its superiority~\citep{alcazar20unifying,siag2023comparing}.

In this work, we introduce a family of algorithms, called \bWae, that adapt \BAE into the \bss
paradigm by incorporating the basic idea of \wastar, inflating $h$ by $W$.
Members of this family differ in how they treat $d$.
We prove that multiplying $d$ by any real number $\lambda \leq W$ results in a \bss solution.
Additionally, we discuss two methods that achieve tighter lower bounds for \bWae (as well as for other algorithms) and allow earlier termination, albeit sometimes at the cost of additional time overhead.
We analyze the impact of $\lambda$ and the tighter bounds, offering recommendations on how to set $\lambda$ and choose the most effective termination condition.

\section{Bidirectional Heuristic Search: Definitions}

In \bihs, the aim is to find a least-cost path, of cost \Cstar, between \start and \goal in a given graph $G$.
$\mathit{c}(x,y)$ denotes the cost of the cheapest path between $x$ and $y$, so $\mathit{c}(start, goal)= \Cstar$.
\bihs executes a forward search (F) from \start and a backward search (B) from \goal until the two searches meet.
\bihs\ algorithms typically maintain two open lists \openf and \openb for the forward and backward searches, respectively.
Each node has a $g$-value, an $h$-value, and an $f$-value ($g_F, h_F, f_F$ and $g_B, h_B, f_B$ for the forward and backward searches, correspondingly).
For a direction $D$ (F or B), \fD, \gD, and \hD\ represent the $f$-, $g$-, and $h$-values in that direction. We use \mbox{$xMin_D$} to denote the minimal $x$ value in \openD, e.g., \gminf represents the minimal $g$-value in \openf.
Most \bihs algorithms consider the two \emph{front-to-end} heuristic functions~\citep{Kaindl1997} $\hF(s)$ and $\hB(s)$ which respectively estimate $\mathit{c}(s,goal)$ and $\mathit{c}(\start,s)$ for all $s \in G$.
$h_F$ is \emph{forward admissible}\ iff $h_F(s)\le \mathit{c}(s,goal)$ for all $s$ in $G$ and is \emph{forward consistent}\ iff
$h_F(s)\le \mathit{c}(s,s^\prime)+h_F(s^\prime)$ for all $s$ and $s^\prime$ in~$G$.
Backward \emph{admissibility} and \emph{consistency} are defined analogously.

\begin{algorithm}[tb]
\caption{\BiHS General Framework}
\DontPrintSemicolon
\label{alg:framework}
$U \leftarrow \infty$; 
$LB \leftarrow \mathit{ComputeLowerBound()}$;\\
\While{${\normalfont \openF} \neq \emptyset \wedge {\normalfont \openB} \neq \emptyset \wedge U > LB$}{
    $D \leftarrow \mathit{ChooseDirection()}$;\\
    $n \leftarrow \mathit{ChooseNode}(D)$;\\
    $\mathit{Expand}(n, D)$; \tcp*{Also updates U}
    $LB \leftarrow \mathit{ComputeLowerBound()}$
}
\Return $U$
\end{algorithm}

\bihs algorithms differ primarily in their direction and node selection strategies, and in their termination criteria.
A general \bihs framework is presented in Algorithm~\ref{alg:framework}.
$U$ represents the cost of the incumbent solution and is returned if $U \leq LB$, where $LB$ is a lower bound on the cost of the desired solution (either optimal or bounded suboptimal).
During each expansion cycle, the algorithm chooses a direction $D$, then selects a node to expand from that direction.
New nodes are matched against the opposite frontier, and if a better solution is found, $U$ is updated.
Subsequently, $LB$ is updated as the content of \openD has changed.
\bihs algorithms vary in how they choose the direction, select nodes, and compute $LB$.
A common direction choosing policy is the \emph{alternate} policy, which chooses to expand from \openf and \openb in a round-robin fashion.
Unless stated otherwise, all algorithms in this paper use this policy.

\section{Background: \bihs Algorithms}

We review several optimal and suboptimal \bihs algorithms. 

\subsection{\wastar and Bidirectional \wastar}

A key \bss algorithm is weighted \astar (\wastar)~\citep{WASTR70}, which replaces \astar's $f(n) = g(n) + h(n)$~\citep{hartNR68Astar} priority function with $\fW(n) = g(n) + W \cdot h(n)$, inflating the heuristic by a user-defined $W \geq 1$. This guarantees a solution with cost of at most $W \cdot \Cstar$. 


BHPA~\citep{Pohl71}, or Bidirectional \astar (BiA$^*$), directly generalizes \astar to a \bihs algorithm.
BiA$^*$ orders nodes in \openD according to $\fD(n)=\gD(n)+\hD(n)$.
Perhaps the most straightforward \bss algorithm, introduced by \citet{KOLL93}, extends the \wastar approach to both search frontiers, effectively generalizing BiA$^*$ to \bihs, a variant denoted here as Bidirectional \wastar (\bwa).
\bwa prioritizes nodes in \openD according to:
\begin{equation*}
\prWD(n) \defeq g_D(n)+W\cdot h_D(n)
\end{equation*}

\bwa terminates once the same state $n$ is found on both open lists and the cost of the path from \start to \goal through $n$ is $\leq LB_{W}$, where 
\begin{equation} \label{eq:LB-BWA}
    LB_{W} \defeq \max(\prWminf , \prWminb)
\end{equation}
In this definition, \prWminf and \prWminb are the minimal $\prW$-values in \openf and \openb, respectively. \citet{KOLL93} did not provide proof for its suboptimality. For completeness we provide it in Appendix A.

\subsection{Meet in the middle (\mm) and \wmm}

The \emph{Meet in the middle} (\mm) algorithm~\citep{MMAAAI16} is a notable \bihs that ensures that the search frontiers \emph{meet in the middle} (i.e., no node $n$ is expanded with $g(n)>\Cstar/2$).
In \mm, nodes $n$ in \openD are prioritized by: 
\begin{equation*}
    pr_D(n) \defeq \max(f_D(n),2g_D(n))
\end{equation*}
\mm expands the node with minimal priority, $\prmin$, among all nodes in \openf and \openb.

\paragraph{\bf Weighted \MM.} Recently, \mm was generalized to a \bss version called \wmm~\citep{WMM}.
The main idea is to follow the WA* approach and inflate the heuristic by $W$.
Several variants were proposed, and the best one prioritizes nodes in \openD using the following formula:
\begin{equation*}
    pr_D(n)  \defeq g_D(n)+\max{(g_D(n),W\cdot h_D(n))}
\end{equation*}

\wmm is {\em W-restrained}, i.e.
the forward search never expands a node $n$ with $g_F(n) > \sfrac{W}{2}  \cdot C^*$ and the backward search never expands a node $n$ with $ g_B(n) > \sfrac{W}{2} \cdot C^*$.

\subsection{BS* and its \bss variant}

All previously discussed algorithms assume an \emph{admissible} heuristic. In contrast, \BSstar~\citep{Kwa89} also requires \emph{consistency} and improves upon BiA$^*$ by leveraging this property for tighter lower bounds and pruning via “nipping” and “trimming.” It uses Pohl's {\em cardinality criterion}~\citep{Pohl71}, expanding from the side with fewer open nodes.
\citet{KOLL93} proposed \WBSstar, a bounded-suboptimal variant of \BSstar. Like \bwa, it uses $f_W(n) \defeq g(n)+W\cdot h(n)$ and applies \BSstar's enhancements to $f_W$ instead of $f$.

\subsection{BAE*}

\BAE~\citep{sadhukhan2013bidirectional} and the equivalent, DIBBS~\citep{SewellJ21}, are more recent \bihs algorithms that, like \BSstar, assume heuristic consistency.
However, \BAE and DIBBS exploit this consistency more effectively than \BSstar, leading to improved search performance.

Let $\dF(n) \defeq \gF(n) - \hB(n)$, the {\em difference} between the actual forward cost of $n$ (from $\start$) and its heuristic estimation to \start.
This indicates the {\em heuristic error} for node $n$ (as $\hB(n)$ is a possibly inaccurate estimation of $\gF(n)$).
Likewise,  $\dB(m) \defeq \gB(m) - \hF(m)$.
\BAE orders nodes in \openD according to: 
\begin{equation*}
    \bD(n) \defeq \gD(n) +\hD(n) +(\gD(n) - \hO(n)) = \fD(n) + \dD(n)
\end{equation*}

\noindent where $\hO$ is the heuristic in the direction opposite of $D$.
$\bD(n)$ adds the heuristic error $\dD(n)$ to $\fD(n)$ to account for the underestimation by $\hO(n)$.
During each expansion cycle, \BAE chooses a search direction $D$ and expands a node with minimal $\bD$-value.
Additionally, \BAE terminates once the same state $n$ is found on both open lists and the cost of the path from \start to \goal through $n$ is $\leq LB_B$, where $LB_B$ is the following lower bound on $\Cstar$:
\begin{equation} \label{eq:LB-B}
    LB_{B} \defeq {(\bminf + \bminb)}/{2}
\end{equation}

\noindent in which \bmind is the minimal $b$-value in \opend.

Given a consistent heuristic, \bae was proven to return an optimal solution.
$b(n)$ is more informed than other priority functions (as it also considers $d(n)$). \bae was shown to outperform common unidirectional and bidirectional algorithms on different domains~\citep{alcazar20unifying,siag2023front} with improvements reported of up to an order of magnitude.
Therefore, \bae is considered a state-of-the-art \bihs algorithm for consistent heuristics.
In this paper, we develop and study \bss variants of \bae.

Other \bss \bihs algorithms have been proposed. A*-connect~\citep{astarconnect}, designed for motion planning, extends BiA$^*$ using an additional inadmissible heuristic to quickly connect the frontiers. As our approach does not rely on such heuristics, A*-connect is not a direct competitor. Similarly, R2R~\citep{rice2012bidirectional} is an \emph{additive} \bss algorithm with guarantees of $\Cstar + W$, whereas we focus on \emph{multiplicative} bounds ($W \cdot \Cstar$), making R2R incomparable in scope.

\section{Weighted BAE*}

We now introduce \bWae, a \bss\ variant of \bae that integrates the \wastar idea of inflating $h$ by a factor \(W \geq 1\) to ensure a solution within \(W \cdot \Cstar\). 

Recall the \bae priority function:
\begin{equation*}
    \bD(n) = \gD(n) +\hD(n) +\dD(n)
\end{equation*}

In \wastar, the $h$-value is scaled by \(W\) to guide the search while preserving suboptimality bounds---a strategy we retain. The key challenge is handling the heuristic error \(\dD(n)\), which has not been addressed in \bss\ settings.

We define the \bWae priority function as:
\begin{equation} \label{eq:BAE}
    \bWD(n) \defeq \gD(n) + W \cdot \hD(n) + \lambda \cdot \dD(n)
\end{equation}

Here, \(\lambda \leq W\) controls the impact of the error term. Special cases include: \(\lambda = 0\), yielding a \bwa-like function; \(\lambda = W\), fully incorporating \(\dF(n)\) into the heuristic; and intermediate values, which balance between the two.

\bWae terminates once a state $n$ is found on both open lists and the cost of the path from \start to \goal through $n$ is $\leq LB_{\mathtt{WB}}$, where:
\begin{equation} \label{eq:LB-BW}
    LB_{\mathtt{WB}} = {(\bWminf + \bWminb)}/{2}
\end{equation}

\noindent and \bWmind is the minimal $\bWD$-value in \opend.

Notably, BAE* guarantees optimality only when given consistent heuristics, due to the nature of the error-correction term $\dD$. Consequently, WBAE* requires heuristic consistency to ensure bounded suboptimality. Throughout the remainder of the paper, we therefore assume that the given heuristic $h$ is consistent.

\subsection{Theoretical analysis}
We first show that \bWae is bounded-suboptimal.
We then discuss the role of $\lambda$ and provide guidelines for selecting values for $\lambda$.
To prove that \bWae is bounded-suboptimal, we start by proving the following lemma.

\begin{lemma} \label{lemma:bw}
It holds that $\bWD(n) \leq W \cdot \bD(n)$ for every node $n$ and any $\lambda \leq W$.
\end{lemma}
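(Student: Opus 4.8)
The plan is to reduce the claimed inequality to an elementary sign comparison by substituting both definitions and cancelling the common term. Expanding the two sides, I would write $\bWD(n) = \gD(n) + W \cdot \hD(n) + \lambda \cdot \dD(n)$ and $W \cdot \bD(n) = W \cdot \gD(n) + W \cdot \hD(n) + W \cdot \dD(n)$, then subtract the former from the latter. The two $W \cdot \hD(n)$ terms cancel, so the claim $\bWD(n) \leq W \cdot \bD(n)$ is equivalent to
\begin{equation*}
(1 - W)\,\gD(n) + (\lambda - W)\,\dD(n) \leq 0 .
\end{equation*}

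Next I would argue that each summand is individually non-positive, which certainly implies the inequality. The coefficients are easy: since $W \geq 1$ we have $(1 - W) \leq 0$, and since $\lambda \leq W$ by hypothesis we have $(\lambda - W) \leq 0$. It therefore suffices to show that the two multiplicands $\gD(n)$ and $\dD(n)$ are both non-negative, because then each term is a non-positive coefficient times a non-negative quantity. The first fact, $\gD(n) \geq 0$, is immediate from the standing assumption of non-negative edge costs, so any accumulated path cost is non-negative.

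The main obstacle is the second fact, $\dD(n) \geq 0$, and this is precisely where heuristic consistency (assumed throughout) is needed. Recall $\dD(n) = \gD(n) - \hO(n)$, where $\hO$ is the opposite-direction heuristic; for $D = F$ this is $\dF(n) = \gF(n) - \hB(n)$, with $\hB(n)$ estimating $\mathit{c}(start, n)$. I would chain two inequalities: first, $\gF(n) \geq \mathit{c}(start, n)$, since $\gF(n)$ is the cost of one particular $start$-to-$n$ path while $\mathit{c}(start, n)$ is the cost of the cheapest such path; second, $\hB(n) \leq \mathit{c}(start, n)$, which is backward admissibility and follows from the assumed consistency. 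Together these give $\hB(n) \leq \mathit{c}(start, n) \leq \gF(n)$, i.e.\ $\dF(n) \geq 0$, and the backward case $D = B$ is symmetric. With both multiplicands non-negative and both coefficients non-positive, the reduced inequality holds for every node $n$ and every $\lambda \leq W$, completing the proof.
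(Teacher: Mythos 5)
Your proof is correct, and it rests on exactly the same ingredients as the paper's: expand both definitions, cancel the common term $W \cdot \hD(n)$, and close using admissibility---which gives $0 \leq \hO(n) \leq \gD(n)$, i.e.\ $\dD(n) \geq 0$---together with the hypotheses $W \geq 1$ and $\lambda \leq W$. The only genuine difference is the grouping. The paper expands $\dD(n) = \gD(n) - \hO(n)$ and rearranges the target into $(W-\lambda)\,\hO(n) \leq (2W-\lambda-1)\,\gD(n)$, which it then proves by chaining $(W-\lambda)\,\hO(n) \leq (2W-\lambda-1)\,\hO(n) \leq (2W-\lambda-1)\,\gD(n)$. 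You instead keep $\dD(n)$ intact and write the deficit as $(1-W)\,\gD(n) + (\lambda-W)\,\dD(n)$, observing that each summand is a non-positive coefficient times a non-negative quantity. Your grouping is slightly cleaner: it makes visible which hypothesis kills which term ($W \geq 1$ for the first, $\lambda \leq W$ for the second), and it sidesteps the paper's intermediate claim that $2W-\lambda-1 > 0$, which is actually false at $W=\lambda=1$ (it equals zero there), though the paper's argument survives since only $2W-\lambda-1 \geq 0$ is needed. One small remark: you credit $\dD(n) \geq 0$ to consistency, but your own chain $\hB(n) \leq c(\start,n) \leq \gF(n)$ uses only (backward) admissibility---the same fact the paper invokes---so consistency plays no role in this lemma.
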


\begin{proof}
Let $n$ be a node discovered during the search in direction $D$.
We want to show that $\bWD(n) \leq W \cdot \bD(n)$.
By using the definitions of $\bWD$ and $\bD$, we get:
\begin{equation*}
    \bWD(n) = \gD(n) + W \cdot \hD(n) + \lambda \cdot (\gD(n) - \hO(n))
\end{equation*}
and similarly,
\begin{equation*}
    W \cdot \bD(n) = W \cdot (\gD(n) + \hD(n) + (\gD(n) - \hO(n)))
\end{equation*}

Substituting these into the inequality in Lemma~\ref{lemma:bw}, $\bWD(n) \leq W \cdot \bD(n)$, we now need to prove that:
\begin{align*}
    \gD(n)& + W \cdot \hD(n) + \lambda \cdot (\gD(n) - \hO(n)) \leq \\
    &W \cdot \gD(n) 
     + W \cdot \hD(n) 
     + W \cdot (\gD(n) - \hO(n))
\end{align*}

By eliminating $W \cdot \hD(n)$ from both sides and simplifying, the inequality that we need to prove becomes:
\begin{equation}\label{eq:final}
    (W - \lambda) \hO(n) \leq (2W - \lambda - 1) \gD(n)
\end{equation} 

Since \(W \geq \lambda\) and \(W \geq 1\) by definition, it follows that \((2W - \lambda - 1) > 0\).
Additionally, because the heuristic is admissible, we have \(0 \leq \hO(n) \leq \gD(n)\).
Consequently, \((2W - \lambda - 1) \hO(n) \leq (2W - \lambda - 1) \gD(n)\).
To prove Inequality~\ref{eq:final}, it is therefore sufficient to demonstrate that:
\begin{equation} \label{eq:final2}
    (W-\lambda) \hO(n) \leq (2W - \lambda - 1) \hO(n)
\end{equation}
By simplifying the inequality, we get:
\begin{equation} \label{eq:final3}
    (1 - W) \hO(n) \leq 0
\end{equation}

For \(\hO(n) = 0\), inequality~\ref{eq:final3} holds trivially.
For \(\hO(n) > 0\), the inequality holds when \(W \geq 1\), which holds by definition.
Therefore, the inequality holds and \(\bWD(n) \leq W \cdot \bD(n)\) for all nodes \(n\).
\end{proof}

We proceed by proving the bounded-suboptimality of \bWae by making use of Lemma~\ref{lemma:bw}.
\begin{theorem}
\bWae is bounded-suboptimal.
\end{theorem}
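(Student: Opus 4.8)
The plan is to show that the value $U$ returned by \bWae satisfies $U \leq W \cdot \Cstar$. Since the \BiHS framework (Algorithm~\ref{alg:framework}) returns $U$ precisely when the loop exits with $U \leq LB_{\mathtt{WB}}$, it suffices to establish the invariant that $LB_{\mathtt{WB}} \leq W \cdot \Cstar$ holds at every iteration (before an optimal solution has been closed off from both sides). Combining this invariant with the stopping test $U \leq LB_{\mathtt{WB}}$ immediately gives $U \leq LB_{\mathtt{WB}} \leq W \cdot \Cstar$, which is exactly bounded-suboptimality. So the real work is bounding $LB_{\mathtt{WB}} = (\bWminf + \bWminb)/2$ from above by $W\cdot\Cstar$.

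I would split this into two ingredients. The first uses Lemma~\ref{lemma:bw} to pass from $\bWD$ to $\bD$ at the level of the frontier minima. Let $\beta_F$ denote the smallest $\bF$-value among the nodes of \openF, and $\beta_B$ the smallest $\bB$-value among those of \openB (these are the analogues of \bminf and \bminb evaluated on \bWae's own open lists). Because $\bWD(n) \leq W \cdot \bD(n)$ for \emph{every} node $n$, the pointwise bound is inherited by the minima: evaluating both sides at the node achieving $\beta_F$ yields $\bWminf \leq W\beta_F$, and symmetrically $\bWminb \leq W\beta_B$. This step is routine, but it is important that both minima are taken over the \emph{same} open lists, namely those produced by \bWae.

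The second ingredient is the transfer of the \bae lower bound to \bWae's frontier: I claim $\beta_F + \beta_B \leq 2\Cstar$. This is exactly the statement that the \bae bound $LB_B = (\bminf + \bminb)/2 \le \Cstar$ is valid, whose justification rests only on heuristic consistency and on the structural fact that, before the optimal solution is found, there is an optimal-path node with optimal forward $g$-value in \openF and an optimal-path node with optimal backward $g$-value in \openB, the former lying at or before the latter along the path; forward and backward consistency then let the heuristic differences telescope so that the sum of the two $b$-values collapses to $2\Cstar$. Crucially, this witness argument is independent of the order in which nodes are expanded and of the priority function driving the search, so it applies verbatim to \bWae. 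Chaining the two ingredients gives $\bWminf + \bWminb \leq W(\beta_F + \beta_B) \leq 2W\Cstar$, hence $LB_{\mathtt{WB}} \leq W \cdot \Cstar$, establishing the invariant.

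I expect the main obstacle to be this second ingredient, not the first: it requires arguing carefully that the \bae lower-bound guarantee is a property of the frontier structure under consistency rather than of \bae's specific expansion order, and that \bWae preserves the optimal-$g$ frontier property (which follows from consistency together with standard duplicate handling). Termination itself is inherited from the \bae framework on a finite graph---once an optimal solution has been encountered, $U \le \Cstar$, and $LB_{\mathtt{WB}}$ rises until the stopping test fires while $U \le LB_{\mathtt{WB}} \le W\cdot\Cstar$ still holds. The remaining calculations are the elementary substitutions already isolated above.
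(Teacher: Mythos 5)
Your proposal follows essentially the same route as the paper's proof: the paper likewise reduces the theorem to the invariant $LB_{\mathtt{WB}} \leq W \cdot \Cstar$, obtains $\bWminf + \bWminb \leq W(\bminf + \bminb)$ by evaluating Lemma~\ref{lemma:bw} at the nodes attaining \bminf and \bminb on the current open lists, and then combines the \bae bound $(\bminf + \bminb)/2 \leq \Cstar$ with the termination test $U \leq LB_{\mathtt{WB}}$. The only difference is in your second ingredient: where you re-derive the \bae frontier bound via optimal-path witnesses and consistency (and correctly flag that it must be an expansion-order-independent property so that it applies to \bWae's own frontiers), the paper simply cites the prior analysis of \citet{sadhukhan2013bidirectional} as holding ``throughout the search,'' leaving that transfer implicit.
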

\begin{proof}
Previous analysis of BAE*~\citep{sadhukhan2013bidirectional} shows that when the heuristic function is consistent, it holds that $\frac{\bminf + \bminb}{2} \leq \Cstar$ throughout the search.
Consequently, it is sufficient to show that throughout the search, $\frac{\bWminf + \bWminb}{2} \leq W \cdot \frac{\bminf + \bminb}{2}$, as this would imply that $\frac{\bWminf + \bWminb}{2} \leq W \cdot \Cstar$.

Let $n_f$ and $n_b$ be the nodes with the minimal $\bF$ (\bminf) and $\bB$ (\bminb) values, respectively, at some arbitrary iteration during the search.
From Lemma~\ref{lemma:bw}, we know that $\bWF(n_f) \leq W \cdot \bF(n_f)$ and $\bWB(n_b) \leq W \cdot \bB(n_b)$.
Thus,
\begin{align*}
\tfrac {\bWminf + \bWminb}{2} &\leq
    \tfrac{\bWF(n_f) + \bWB(n_b)}{2} \\
    &\leq W \cdot \tfrac{\bF(n_f) + \bB(n_b)}{2} \\
    &= W \cdot \tfrac{\bminf + \bminb}{2}
    \leq W\cdot \Cstar.
\end{align*}

Since \bWae terminates only when a solution is found with cost  $\leq \frac{\bWminf + \bWminb}{2}$, it is guaranteed to return a solution with a cost  $\leq W \cdot \Cstar$ (\bss)
This ensures that \bWae is a bounded-suboptimal algorithm.
\end{proof}

Finally, we show that similar to \wastar, \bWae can avoid node re-expansions when given consistent heuristics.
\begin{theorem}
Given consistent heuristics, \bWae finds a bounded-suboptimal solution without re-expanding nodes.
\end{theorem}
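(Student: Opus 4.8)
The plan is to prove the claim one search direction at a time---the forward and backward searches are symmetric---and to adapt the classical fact that \wastar\ with a consistent heuristic never \emph{needs} to reopen a closed node. What makes this more than a direct quotation of the \wastar\ result is that the \bWae\ priority $\bWF(n)=(1+\lambda)\gF(n)+W\hF(n)-\lambda\hB(n)$ mixes \emph{both} heuristics through the error term $\dF$; hence consistency of \hF\ alone does not control how $\bWF$ changes along an edge, and I will have to use forward and backward consistency together.

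The first step is a per-edge inequality. For a node $n$ generated as a forward successor of an expanded node $p$ (so $\gF(n)=\gF(p)+c(p,n)$), I would expand $\bWF(n)-\bWF(p)$ and bound its two heuristic contributions separately: forward consistency gives $\hF(n)-\hF(p)\ge -c(p,n)$, while backward consistency gives $\hB(n)-\hB(p)\le c(p,n)$. Substituting both, and using the operating range $0\le\lambda\le W$, yields
\begin{equation*}
\bWF(n)\ \ge\ \bWF(p)-(W-1)\,c(p,n),
\end{equation*}
so along any edge the priority can drop by at most $(W-1)\,c(p,n)$. This is exactly the inequality underlying the no-reopening property of \wastar: at $W=1$ it forces the priority to be monotone along edges, while for $W>1$ it quantifies the only way a cheaper path to a closed node could ever arise.

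The second step turns this into a guarantee on closed nodes. By induction on the order in which the forward search closes nodes, I would prove the invariant that every forward-closed node $n$ satisfies $\gF(n)\le W\cdot c(\start,n)$. Given the optimal path from \start\ to $n$, I take its first node $x_i$ still in \openF, bound $\gF(x_i)$ through its already-closed predecessor by the inductive hypothesis, invoke the selection rule $\bWF(n)\le\bWF(x_i)$, and telescope consistency of both heuristics along the optimal sub-path from $x_i$ to $n$. All cross terms then carry a factor $(W-1)$ with a non-positive coefficient, collapsing to $\gF(n)\le W\cdot c(\start,n)$; symmetrically every backward-closed node $m$ satisfies $\gB(m)\le W\cdot c(m,\goal)$. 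This shows a re-discovered path to a closed node is never cheap enough to be needed, so reopening may be disabled.

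The last step feeds these bounds back into the bounded-suboptimality argument of the preceding theorem, whose guarantee rests on the lower bound $\tfrac{\bWminf+\bWminb}{2}\le W\cdot\Cstar$; this itself reduces, via Lemma~\ref{lemma:bw}, to the \BAE\ invariant $\tfrac{\bminf+\bminb}{2}\le\Cstar$ evaluated on the open lists maintained by \bWae. I expect the main obstacle to be precisely this reduction: I must verify that disabling reopening does not corrupt that invariant---i.e.\ that even with closed nodes carrying $g$-values inflated up to a factor $W$, the open lists still contain witnesses on the optimal path whose combined $b$-value does not exceed $2\Cstar$. Establishing this witness property \emph{without} the usual ``optimal-$g$ node in \open'' guarantee, using only the consistency-based edge inequality above, is the crux of the argument; once it is in place, the termination condition applies verbatim and \bWae\ returns a solution of cost $\le W\cdot\Cstar$ without ever re-expanding a node.
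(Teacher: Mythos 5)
Your first two steps are correct and are, in substance, the paper's own proof (its Appendix~B). The paper argues by contradiction on the first node $u$ expanded with $\gF(u) > W\cdot c(\start,u)$, takes an open ancestor $u'$ on the optimal path to $u$, and uses consistency of \emph{both} heuristics ($\hF(u')\le c(u',u)+\hF(u)$ and $\hB(u)\le c(u',u)+\hB(u')$), together with $\lambda\ge 0$ and $W\ge 1$, to derive $\bWF(u')<\bWF(u)$, contradicting the selection rule. Your induction on closing order, built on the per-edge inequality $\bWF(n)\ge\bWF(p)-(W-1)\,c(p,n)$ and the telescoped collapse to $\gF(n)\le W\cdot c(\start,n)$, establishes exactly the same invariant by exactly the same mechanism (the needed sign condition reduces to $\lambda(W-1)\ge 0$, which your algebra handles correctly). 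So up to reformulation, steps 1--2 are the paper's proof.

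The genuine gap is your step 3, which you name as ``the crux'' but never discharge, and it does not discharge itself: the witness property in the form Theorem~2 uses it is exactly what no-reopening destroys. Without reopening, the first \openF node $u$ on an optimal path can carry $\gF(u)$ inflated by up to a factor $W$ (its on-path predecessor may have been closed with a suboptimal $g$-value reached via a detour), and because $\bWD$ weights $g$ with coefficient $(1+\lambda)$, the natural pairing bound for witnesses $u\in\openF$, $v\in\openB$ only yields $\bWF(u)+\bWB(v)\le 2W\Cstar+\lambda(W-1)\bigl(c(\start,u)+c(v,\goal)\bigr)$, which overshoots the $2W\Cstar$ budget whenever $\lambda>0$ and $W>1$. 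Hence the termination condition cannot be ``applied verbatim'' as you hope; either a different pairing/invariant or an argument that an incumbent of cost $\le W\Cstar$ already exists in the problematic case is required. You should know, however, that the paper does not supply this missing step either: its proof ends with the expansion invariant $\gD(n)\le W\cdot c(\start,n)$ and then asserts in a single sentence that, since termination requires the solution cost to be at most the lower bound, the returned cost is at most $W\cdot\Cstar$. So your proposal proves exactly what the paper proves, and the step you correctly identify as unresolved is precisely the point the paper's proof glosses over.
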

\begin{proof}
Due to the priority function of \bWae, every node $n$ in direction $D$ that undergoes expansion is ensured to have been reached through a bounded suboptimal path, specifically satisfying, $\gD(n) \leq W \cdot c(\start,n)$.
For a more detailed proof, refer to Appendix B.
\end{proof}

\subsection[The Role of lambda]{The Role of $\lambda$}
In this section, we discuss the role of $\lambda$ and provide guidelines for selecting an appropriate value for it.

In heuristic search, algorithms must perform two parallel tasks.
The first task is finding and refining solutions, which corresponds to lowering the upper bound ($U$).
The second task is proving the (sub)optimality of solutions, corresponding to increasing the lower bound ($LB$).
Search algorithms can only terminate when $U \leq LB$, making both tasks essential.
However, the approaches for tackling each task often contradict one another.
For example, the DVCBS algorithm \citep{shperberg2019enriching} excels at increasing the lower bound but often finds solutions more slowly than other algorithms, whereas Greedy Best-First Search (GBFS)~\cite{PUZ1566}---which prioritizes nodes based solely on their \(h\)-values---is good at finding solutions, but is not concerned with proving optimality.

We next show that while incorporating the heuristic error $d$ supports the task of proving solution suboptimality (the second task), it can hinder the process of finding a solution (the first task).
Given two nodes, $n_1$ and $n_2$, in direction $D$ with $\gD(n_1) + W \cdot \hD(n_1) = \gD(n_2) + W \cdot \hD(n_2)$, \bWae with $\lambda > 0$ delays the expansion of the node with the higher heuristic error $d$.
By definition, $d(n) = \gD(n) - \hO(n)$.
Due to admissibility, $\gD(n) \geq \hO(n) \geq 0$.
Thus, for every node $n$, $d(n) \leq \gD(n)$.
Consequently, nodes with higher $g$-values tend to have higher $d$ values, and their expansion is often delayed by \bWae.
Expanding nodes with high $g$-values is essential for finding solutions.
Therefore, \bWae is expected to be less efficient than \bwa in finding solutions.
Nonetheless, $d$ offers a more accurate estimate of the lower bound on solution cost, making a larger $\lambda$ more effective for establishing the (sub)optimality of solutions.

Due to the trade-off between increasing \(LB\) and decreasing \(U\), different approaches are required based on the specific challenge of each task.
At one extreme, if the initial heuristic yields a sufficiently high lower bound to guarantee (sub)optimality upon finding a solution, the focus should be on finding a solution---favoring a small (or zero) $\lambda$, especially when valid solutions are rare.
Conversely, when many (sub)optimal solutions exist but proving their quality requires extensive search, the emphasis should be on raising $LB$, implying a large $\lambda$.
In general, we expect the task of increasing \(LB\) to be dominant when: (i) the heuristic is weak, significantly underestimating the shortest paths from many states, and (ii) \(W\) is small, as the effort to prove suboptimality increases.
Thus, we propose the following hypothesis:

\smallskip
\noindent \textbf{Hypothesis on the role of \(\lambda\):} Larger values of \(\lambda\) are more effective when the heuristic is weak and the suboptimality bound \(W\) is small. Smaller values of \(\lambda\) are preferable when the heuristic is more accurate or when \(W\) is large.

Figure~\ref{fig:example} presents a small example illustrating how different values of $\lambda$ affect node expansions during search.
The top displays a problem instance with heuristic values and edge costs.
We consider \bWae with $W=1.1$ and $W=5$, and two $\lambda$-values, $\lambda = W$ and $\lambda = W^{-2}$; priorities for each configuration are given inside the table (excluding nodes $s$ and $g$, which are expanded first regardless).

All algorithms first expand $s$ and $g$. Now, consider $w = 1.1$. With both values of $\lambda$, $u$ is expanded, yielding a solution of cost 5. For $\lambda = W$, the search can terminate: the lowest priority in the backward direction is 6.07, and in the forward direction, it is at most the priority of $x$, which is 4.21. Hence, the lower bound $(6.07+4.21)/2=5.14$ exceeds 5. In contrast, with $\lambda = W^{-2}$, after $u$ is expanded, the minimal forward priority is at most that of $x$, giving a lower bound of $(5.52 + 4.07)/2 = 4.795$, which is less than 5, requiring the expansion of $x$.
For $w = 5$ and $\lambda = W^{-2}$, $u$ is expanded and the search terminates. With $\lambda = W$, $x$ is expanded even before $u$, leading to more node expansions. 

This example supports our hypothesis above. For small values  of $w$  ($w=1.1$) larger values of $\lambda$ ($\lambda=W$) perform better than smaller values ($\lambda=W^{-2}$). For large values  of $W$ ($W=5$) small values of $\lambda$ ($\lambda=W^{-2}$) perform better than larger values ($\lambda=W$).

\begin{figure}[t]
\centering
\subfloat{\includegraphics[width=0.8\linewidth]{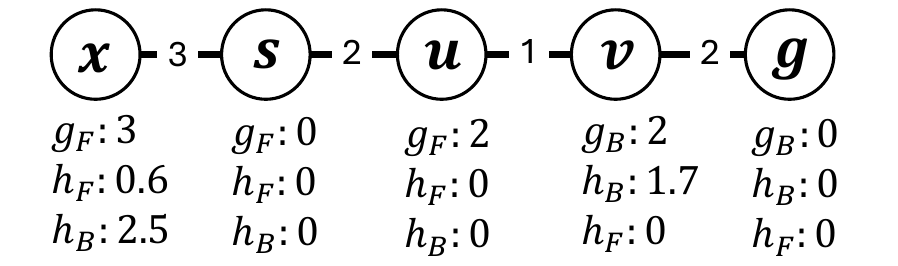}} \\
\smallskip
\small
\begin{tabular}{ll|rr|rr}
\toprule
\multirow{2}{*}{State} & \multirow{2}{*}{Dir.} & \multicolumn{2}{c|}{$W=1.1$} & \multicolumn{2}{c}{$W=5$} \\ \cline{3-6} & & $\lambda = W$ & $\lambda =W^{-2}$ & $\lambda =W$ & $\lambda =W^{-2}$ \\
\midrule
$u$ & F & 4.20 & 3.65 & 12.0 & 2.08 \\
$x$ & F & 4.21 & 4.07 & 8.5 & 6.02 \\
$v$ & B & 6.07 & 5.52 & 20.5 & 10.58 \\
\bottomrule
\end{tabular}
\caption{An example demonstrating the role of $\lambda$}
\label{fig:example}
\end{figure}

\section{Obtaining Stronger Lower Bounds}

In this section, we examine two methods to tighten lower bounds throughout the search, potentially reducing the effort needed by \bss algorithms to return a suboptimal solution.

\subsection{Alternative Termination Criteria}

As reviewed in the background section, algorithms in the \wastar family typically terminate based on the minimum values of their priority, which multiply the heuristic estimation $h$ (but not the cost $g$) by $W$.
For example, \bwa terminates when $U \leq \max(\prWminf , \prWminb)$, where $U$ is the cost of the incumbent solution, and \prWminf, \prWminb are the minimal $\prW$-values in the open lists.

Nonetheless, \fminf and \fminb are still lower bounds on \Cstar, the cost of the shortest path between \start and \goal.
Consequently, $W \cdot \max(\fminf, \fminb) \leq W \cdot \Cstar$, and therefore $U \leq W \cdot \max(\fminf, \fminb)$ is an alternative termination condition that still maintains the bounded-suboptimality guarantees.
Moreover, let $n$ be the node with the minimal $f$-value in direction $D$.
Thus, 
\begin{align*}
    W \cdot \fmind &= W \cdot \gD(n) + W \cdot \hD(n) \\
    &\geq \gD(n) + W \cdot \hD(n)  \\
    &\geq \prWD(n) \geq \prWmind
\end{align*}

This means that the alternative lower bound (denoted \alb) is tighter than the original lower bound, which can result in fewer expansions during the search.

The above \alb was defined based on \fmin.
Nonetheless, a similar \alb can be defined for all algorithms in the \wastar family by taking the bounds of their corresponding optimal variant and multiplying them by $W$.
For \bWae, \alb will therefore halt the search when $U \leq W \cdot LB_{B}$. 

Nevertheless, using \alb incurs extra overhead as it maintains both priorities (e.g., \prWmind and \fmind for \bwa, or \bWmind and \bmind for \bWae).
This might require storing two open lists for each frontier.
We analyze the improvement resulting from \alb, as well as the computational overhead, in our empirical evaluation below.

\subsection{Utilizing Information on Edge Costs}

\begin{table}[tb]
\begin{adjustbox}
{width=0.9\columnwidth,center}
\begin{tabular}{lrrrrr}
\toprule
\multicolumn{1}{c}{Algorithm} & \multicolumn{1}{c}{ToH} & \multicolumn{1}{c}{DAO} & \multicolumn{1}{c}{Mazes} & \multicolumn{1}{c}{Pancake} & \multicolumn{1}{c}{STP} \\
\midrule
\wastar & 697K & 1,585K & 2,024K & 56K & 524K \\
\bwa & 601K & 1,466K & 1,694K & 76K & 502K \\
\wmm & 589K & 845K & 606K & 75K & 400K \\
\WBSstar  & 644K & 1,564K & 1,985K & 67K & 497K \\
\cmidrule(lr){1-6}
\bWae$\frac{1}{W^2}$ & 607K & 1,447K & 1,603K & 66K & 465K\\
\bWae$\frac{1}{W}$ & 540K & 1,447K & 1,646K & 66K & 450K\\
\bWae$1$ & 534K & 1,468K & 1,717K & 65K & 428K\\
\bWae$W$ & 523K & 1,464K & 1,667K & 61K & 336K\\
\bWae $\lambda$ & 478K & 1,454K & 1,709K & 60K & 430K\\
\bottomrule
\end{tabular}
\end{adjustbox}
\caption{Expansions/sec for all domains and algorithms, averaged across all weights $W$.}
\label{tb:nps}
\end{table}

\begin{table*}[tb]
\begin{adjustbox}{width=\textwidth,center}
\begin{tabular}{@{}l|ccccccccc|ccccccccc|ccccccccc@{}}
\toprule
\multicolumn{1}{c|}{} & \multicolumn{9}{c|}{ToH-12 (10+2)} & \multicolumn{9}{c|}{ToH-12 (8+4)} & \multicolumn{9}{c}{ToH-12 (6+6)} \\ \midrule
\multicolumn{1}{c|}{Algorithm} & 1 & 1.1 & 1.2 & 1.5 & 1.7 & 2 & 3 & 5 & 10 & 1 & 1.1 & 1.2 & 1.5 & 1.7 & 2 & 3 & 5 & 10 & 1 & 1.1 & 1.2 & 1.5 & 1.7 & 2 & 3 & 5 & 10 \\
\cmidrule(lr){2-10}
\cmidrule(lr){11-19}
\cmidrule(lr){20-28}
\wastar & 279K & 137K & 54K & 4K & 2K & 1K & 417 & 433 & 441 & 2M & 2M & 1M & 486K & 318K & 278K & 26K & 13K & 20K & 3M & 3M & 3M & 2M & 2M & 1M & 1M & 863K & 331K \\
\bwa & 236K & 106K & 40K & \textbf{3K} & \textbf{1K} & \textbf{627} & \textbf{374} & \textbf{343} & 347 & 1M & 806K & 558K & 186K & 109K & 39K & \textbf{11K} & \textbf{5K} & \textbf{4K} & 2M & 2M & 1M & 682K & 439K & 266K & 117K & 72K & \textbf{43K} \\
\wmm & 337K & 164K & 60K & 3K & 2K & 2K & 5K & 4K & 556 & 1M & 863K & 707K & 269K & 157K & 56K & 15K & 10K & 8K & 964K & 1M & 1M & 842K & 600K & 395K & 251K & 121K & 64K \\
\WBSstar & 179K & 82K & 33K & \textbf{3K} & \textbf{1K} & 649 & 377 & 349 & 353 & 924K & 656K & 466K & 167K & 94K & 35K & \textbf{11K} & 6K & \textbf{4K} & 2M & 1M & 1M & 584K & 390K & 248K & \textbf{115K} & 76K & 49K \\ \cmidrule(lr){2-28}
\bWae$\frac{1}{W^2}$ & \textbf{48K} & 30K & \textbf{17K} & 5K & 3K & 1K & 394 & 353 & 343 & \textbf{189K} & 182K & 162K & 105K & 74K & 34K & 12K & \textbf{5K} & \textbf{4K} & \textbf{382K} & 403K & 402K & 358K & 320K & \textbf{221K} & 119K & \textbf{71K} & 45K \\
\bWae$\frac{1}{W}$ & \textbf{48K} & \textbf{28K} & 18K & 7K & 4K & 1K & 405 & 364 & 343 & \textbf{189K} & 162K & 137K & \textbf{97K} & 76K & 42K & 14K & 6K & \textbf{4K} & \textbf{382K} & 362K & 342K & 310K & 284K & 227K & 130K & 77K & 47K \\
\bWae$1$ & \textbf{48K} & \textbf{28K} & 20K & 9K & 6K & 3K & 536 & 380 & 357 & \textbf{189K} & 148K & \textbf{130K} & 105K & 92K & 69K & 22K & 9K & 5K & \textbf{382K} & 333K & 317K & 301K & 292K & 263K & 176K & 111K & 68K \\
\bWae$W$ & \textbf{48K} & 29K & 23K & 13K & 11K & 7K & 3K & 2K & 1K & \textbf{189K} & \textbf{143K} & 131K & 109K & 102K & 88K & 71K & 56K & 41K & \textbf{382K} & \textbf{318K} & \textbf{306K} & \textbf{282K} & \textbf{273K} & 252K & 227K & 202K & 186K \\
\bWae$\lambda^*$ & \textbf{48K} & \textbf{28K} & \textbf{17K} & \textbf{3K} & 2K & 789 & 394 & 356 & \textbf{342} & 191K & 144K & 132K & 98K & \textbf{72K} & \textbf{32K} & \textbf{11K} & \textbf{5K} & \textbf{4K} & 386K & 319K & 309K & 309K & 284K & 223K & 121K & \textbf{71K} & 45K \\ \bottomrule
\end{tabular}
\end{adjustbox}
\caption{Average number of node expansions on the 12-disks ToH domain, with $(10+2)$, $(8+4)$, and $(6+6)$ PDBs}
\label{tab:hanoi}
\end{table*}

\begin{table*}[tb]
\begin{adjustbox}{width=\textwidth,center}
\begin{tabular}{@{}l|ccccccccc|ccccccccc|ccccccccc@{}}
\toprule
\multicolumn{1}{c|}{} & \multicolumn{9}{c|}{STP} & \multicolumn{9}{c|}{Mazes} & \multicolumn{9}{c}{DAO} \\ \midrule
\multicolumn{1}{c|}{Algorithm} & 1 & 1.1 & 1.2 & 1.5 & 1.7 & 2 & 3 & 5 & 10 & 1 & 1.1 & 1.2 & 1.5 & 1.7 & 2 & 3 & 5 & 10 & 1 & 1.1 & 1.2 & 1.5 & 1.7 & 2 & 3 & 5 & 10 \\ 
\cmidrule(lr){2-10} \cmidrule(lr){11-19} \cmidrule(lr){20-28}
\wastar & 16M & 10M & 3M & 320K & 103K & 41K & 12K & \textbf{5K} & 4K & 99K & 98K & 96K & 92K & 90K & 86K & 76K & 65K & \textbf{56K} & \textbf{5K} & \textbf{5K} & \textbf{4K} & \textbf{4K} & \textbf{3K} & \textbf{3K} & \textbf{2K} & \textbf{2K} & \textbf{2K} \\
\bwa & 19M & 4M & 658K & 78K & 50K & \textbf{29K} & \textbf{11K} & \textbf{5K} & 4K & 100K & 99K & 98K & 95K & 92K & 89K & 77K & 66K & 59K & 7K & 6K & 5K & \textbf{4K} & 4K & \textbf{3K} & 3K & 3K & \textbf{2K} \\
\wmm & 15M & 4M & 861K & 162K & 160K & 270K & 2M & 3M & 47K & 85K & 85K & 85K & 84K & 84K & 84K & 80K & 70K & 60K & 8K & 7K & 6K & 5K & 5K & 4K & 3K & 3K & \textbf{2K} \\
\WBSstar & 12M & 3M & 486K & \textbf{76K} & \textbf{49K} & \textbf{29K} & \textbf{11K} & \textbf{5K} & 4K & 88K & 87K & 86K & 83K & 82K & 78K & \textbf{69K} & \textbf{60K} & \textbf{56K} & 6K & \textbf{5K} & 5K & \textbf{4K} & 4K & \textbf{3K} & 3K & 3K & 3K \\ \cmidrule(lr){2-28}
\bWae$\frac{1}{W^2}$ & \textbf{3M} & \textbf{1M} & 449K & 143K & 80K & 34K & 12K & \textbf{5K} & \textbf{3K} & \textbf{81K} & 85K & 88K & 90K & 90K & 87K & 77K & 66K & 59K & 7K & 6K & 6K & 5K & 4K & 4K & 3K & 3K & \textbf{2K} \\
\bWae$\frac{1}{W}$ & \textbf{3M} & \textbf{1M} & 586K & 202K & 139K & 57K & 16K & 6K & \textbf{3K} & \textbf{81K} & 83K & 84K & 85K & 84K & 83K & 76K & 66K & 59K & 7K & 6K & 6K & 5K & 4K & 4K & 3K & 3K & \textbf{2K} \\
\bWae$1$ & \textbf{3M} & \textbf{1M} & 763K & 337K & 250K & 171K & 40K & 11K & 5K & \textbf{81K} & \textbf{80K} & \textbf{80K} & \textbf{79K} & \textbf{78K} & \textbf{77K} & 74K & 69K & 62K & 7K & 6K & 6K & 5K & 5K & 4K & 3K & 3K & \textbf{2K} \\
\bWae$W$ & \textbf{3M} & \textbf{1M} & 950K & 688K & 617K & 553K & 1M & 831K & 575K & \textbf{81K} & \textbf{80K} & \textbf{80K} & 80K & 79K & 79K & 78K & 77K & 77K & 7K & 6K & 6K & 5K & 5K & 5K & 4K & 4K & 4K \\ 
\bWae$\lambda^*$ & \textbf{3M} & \textbf{1M} & \textbf{368K} & 89K & 60K & 33K & 12K & \textbf{5K} & \textbf{3K} & 81K & \textbf{80K} & \textbf{80K} & \textbf{79K} & \textbf{78K} & \textbf{77K} & 73K & 66K & 59K  & 7K & 6K & 5K & \textbf{4K} & 4K & \textbf{3K} & 3K & 3K & \textbf{2K} \\

\bottomrule
\end{tabular}
\end{adjustbox}
\caption{Average number of node expansions on the STP, Mazes, and DAO domains}
\label{tab:SMD}
\end{table*}

\begin{table*}[tb]
\begin{adjustbox}{width=\textwidth,center}
\begin{tabular}{@{}l|ccccccccc|ccccccccc|ccccccccc@{}}
\toprule
\multicolumn{1}{c}{} & \multicolumn{9}{c}{GAP} & \multicolumn{9}{c}{GAP-1} & \multicolumn{9}{c}{GAP-2} \\ \midrule
\multicolumn{1}{c|}{Algorithm} & 1 & 1.1 & 1.2 & 1.5 & 1.7 & 2 & 3 & 5 & 10 & 1 & 1.1 & 1.2 & 1.5 & 1.7 & 2 & 3 & 5 & 10 & 1 & 1.1 & 1.2 & 1.5 & 1.7 & 2 & 3 & 5 & 10 \\ \cmidrule(lr){2-10} \cmidrule(lr){11-19} \cmidrule(lr){20-28}
\wastar & \textbf{194} & 147 & \textbf{51} & \textbf{28} & \textbf{26} & \textbf{23} & \textbf{23} & \textbf{23} & \textbf{23} & 42K & 25K & 3K & 271 & 146 & 62 & 46 & \textbf{42} & \textbf{42} & 5M & 5M & 1M & 77K & 21K & 4K & 1K & 1K & 1K \\
\bwa & 225 & \textbf{100} & 60 & 41 & 40 & 37 & 37 & 37 & 37 & 61K & 5K & 893 & 59 & 49 & \textbf{43} & \textbf{42} & \textbf{42} & \textbf{42} & 8M & 2M & 194K & 5K & 1K & 349 & 180 & \textbf{164} & \textbf{167} \\
\wmm & 417 & 245 & 312 & 2K & 5K & 19K & 5K & 55 & 37 & 43K & 4K & 1K & 2K & 4K & 10K & 103K & 665 & 54 & 3M & 510K & 90K & 12K & 5K & 16K & 134K & 46K & 437 \\
\WBSstar & 203 & 106 & 62 & 41 & 40 & 37 & 37 & 37 & 37 & 33K & 4K & 884 & \textbf{51} & \textbf{48} & \textbf{43} & \textbf{42} & \textbf{42} & \textbf{42} & 4M & 1M & 126K & 4K & \textbf{527} & \textbf{254} & \textbf{175} & \textbf{164} & \textbf{167} \\ \cmidrule(lr){2-28}
\bWae$\frac{1}{W^2}$ & 205 & 111 & 78 & 43 & 41 & 37 & 36 & 36 & 36 & \textbf{2K} & \textbf{596} & 386 & 75 & 59 & 48 & 43 & \textbf{42} & \textbf{42} & 145K & 48K & 9K & 3K & 1K & 540 & 180 & 176 & 180 \\
\bWae$\frac{1}{W}$ & 205 & 114 & 80 & 46 & 42 & 37 & 36 & 36 & 36 & \textbf{2K} & 637 & 511 & 120 & 71 & 50 & 43 & \textbf{42} & \textbf{42} & 145K & 38K & \textbf{8K} & 7K & 2K & 818 & 176 & 176 & 180 \\
\bWae$1$ & 205 & 116 & 80 & 48 & 43 & 38 & 36 & 36 & 36 & \textbf{2K} & 687 & 580 & 164 & 117 & 60 & 45 & \textbf{42} & \textbf{42} & 145K & 37K & 11K & 12K & 9K & 2K & 336 & 197 & 192 \\
\bWae$W$ & 205 & 115 & 83 & 57 & 55 & 44 & 47 & 55 & 47 & \textbf{2K} & 742 & 873 & 2K & 1K & 845 & 385 & 216 & 178 & 145K & 37K & 17K & 71K & 132K & 393K & 85K & 44K & 26K \\ 
\bWae $\lambda^*$ & 205 & 114 & 66 & 42 & 40 & 37 & 36 & 36 & 36 & \textbf{2K} & 583 & \textbf{283} & 75 & 60 & 50 & 43 & \textbf{42} & \textbf{42} & \textbf{134K} & \textbf{35K} & \textbf{8K} & \textbf{2K} & 1K & 540 & 196 & 172 & 201 \\ \bottomrule

\end{tabular}
\end{adjustbox}
\caption{Average number of node expansions on 18 Pancakes domain, with GAP to GAP-2 heuristics}
\label{tab:pancake18}
\end{table*}

\citet{alcazar20unifying} demonstrated how edge cost information can tighten lower bounds on \Cstar throughout the search.
They assumed the algorithm knows the  {\em greatest common denominator} (GCD) of all edge costs, $\iota$, and used it to redefine \bae's termination condition:
\begin{equation*}
    LB_B = \left\lceil \tfrac{{(\bminf + \bminb)}/{2}}{\iota} \right\rceil \cdot \iota
\end{equation*}
and showed that this new definition is a lower bound on \Cstar, based on the idea that solutions can only increase in GCD increments.
Thus, if a lower bound suggests a solution not divisible by $\iota$, it can be rounded up to the nearest $\iota$ increment.
For example, if $\iota=3$ and the lower-bound $LB=14$, $\lceil 14/3 \rceil \cdot 3 = 15$ can be used as a tighter lower bound.
 A similar approach could apply to \astar's lower bound, but typically, $\fmin = g(n) + h(n)$ is already divisible by $\iota$ since $g(n)$ is always divisible by $\iota$, and $h(n)$ is often as well.

This alteration to the lower bound formula can also be applied to the \bss variants of \bae as follows:
\begin{equation*}
    LB_{\mathtt{WB}} = \left\lceil \tfrac{{(\bWminf + \bWminb)}/{2}}{\iota \cdot W} \right\rceil \cdot \iota \cdot W
\end{equation*}

Let $n_f$ and $n_b$ be the nodes with the minimal $b$-value in \openf and \openb, respectively, at some arbitrary iteration during the search.
This formulation of $LB_{\mathtt{WB}}$ still yields bounded solutions, since:
\begin{adjustbox}{width=\linewidth,center}
\begin{minipage}{\linewidth}
\begin{align*}
    \left\lceil \tfrac{(\bWminf + \bWminb)/{2}}{\iota \cdot W} \right\rceil \cdot \iota \cdot W 
    &\leq \left\lceil \tfrac{(\bWF(n_f) + \bWF(n_B))/{2}}{\iota \cdot W} \right\rceil \cdot \iota \cdot W \\
    &\leq \left\lceil \tfrac{(W \cdot {(\bF(n_f) + \bF(n_B))}/{2}}{\iota \cdot W} \right\rceil \cdot \iota \cdot W \\
    &= \left\lceil \tfrac{W \cdot (\bminf + \bminb)/{2}}{\iota \cdot W} \right\rceil \cdot \iota \cdot W \\
    &= \left\lceil \tfrac{(\bminf + \bminb)/{2}}{\iota} \right\rceil \cdot \iota \cdot W \leq \Cstar \cdot W
\end{align*}
\end{minipage}
\end{adjustbox}

Similar adjustments can be applied to all \wastar-family algorithms.
Notably, for $W>1$, this revised formulation can benefit \bwa (and also \wastar), as $\fmin=g(n)+W \cdot h(n)$ is not guaranteed to be divisible by $W$.
Moreover, unlike ALB, which requires maintaining multiple open lists that incur additional computational overhead, the GCD improvement adds no computational cost.

\section{Empirical Evaluation}

We now present an empirical evaluation comparing \bihs \bss algorithms, including our newly introduced \bWae.
This evaluation also tests our hypothesis on the role of $\lambda$ and assesses the impact of the alternative termination conditions.
All code for reproducing the results—including algorithms, domain implementations, and experiment scripts—will be available upon acceptance (*anonymized for review*).


\begin{figure*}[t]
    \centering
    \subfloat[GCD Reduction in Expansions]{%
        \includegraphics[width=0.29\linewidth]{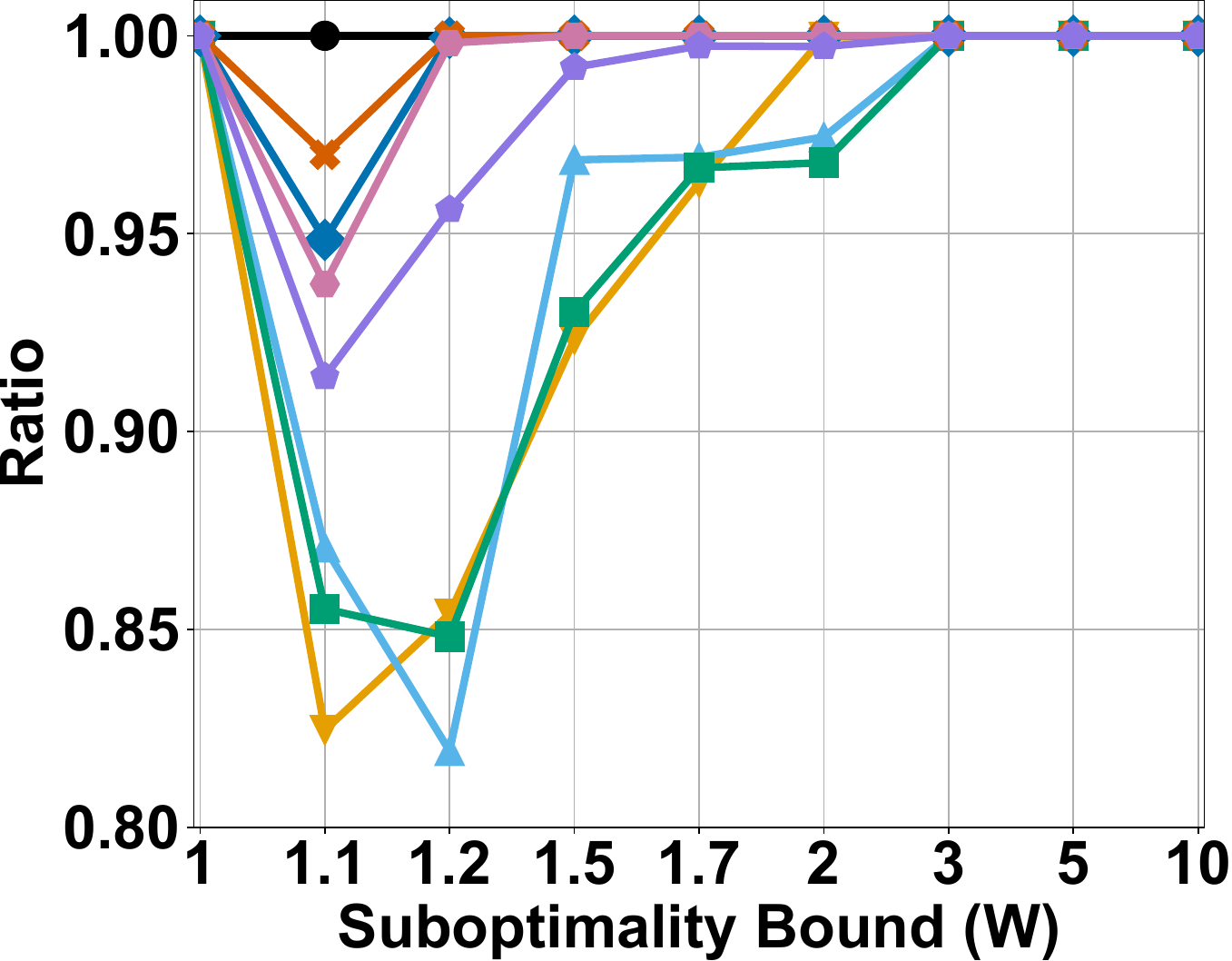}
        \label{fig:subfig3}
    }
    \hfill
    \subfloat[\alb Reduction in Expansions]{%
        \includegraphics[width=0.29\linewidth]{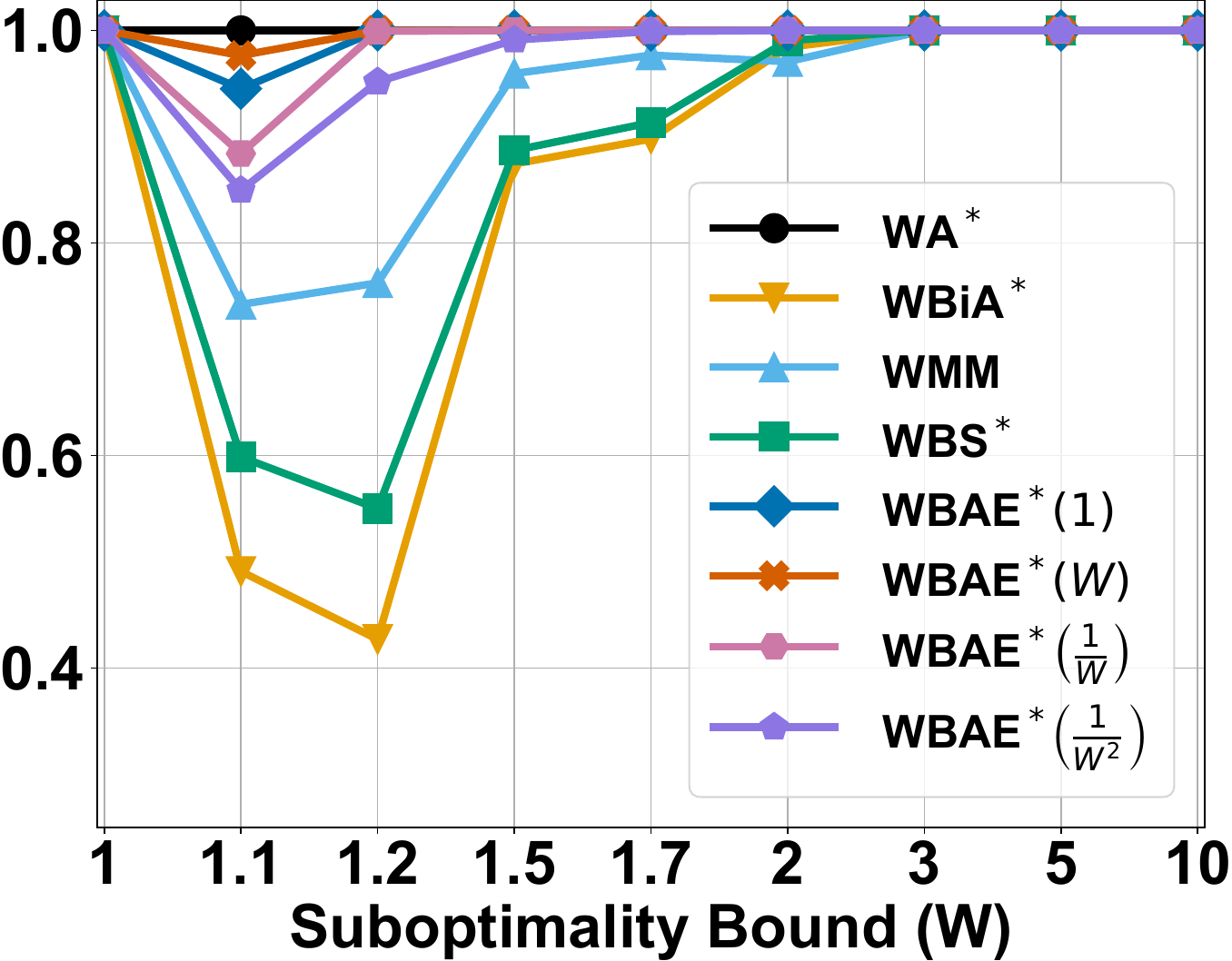}
        \label{fig:subfig1}
    }
    \hfill
    \subfloat[\alb Reduction in Runtime]{%
        \includegraphics[width=0.3\linewidth]{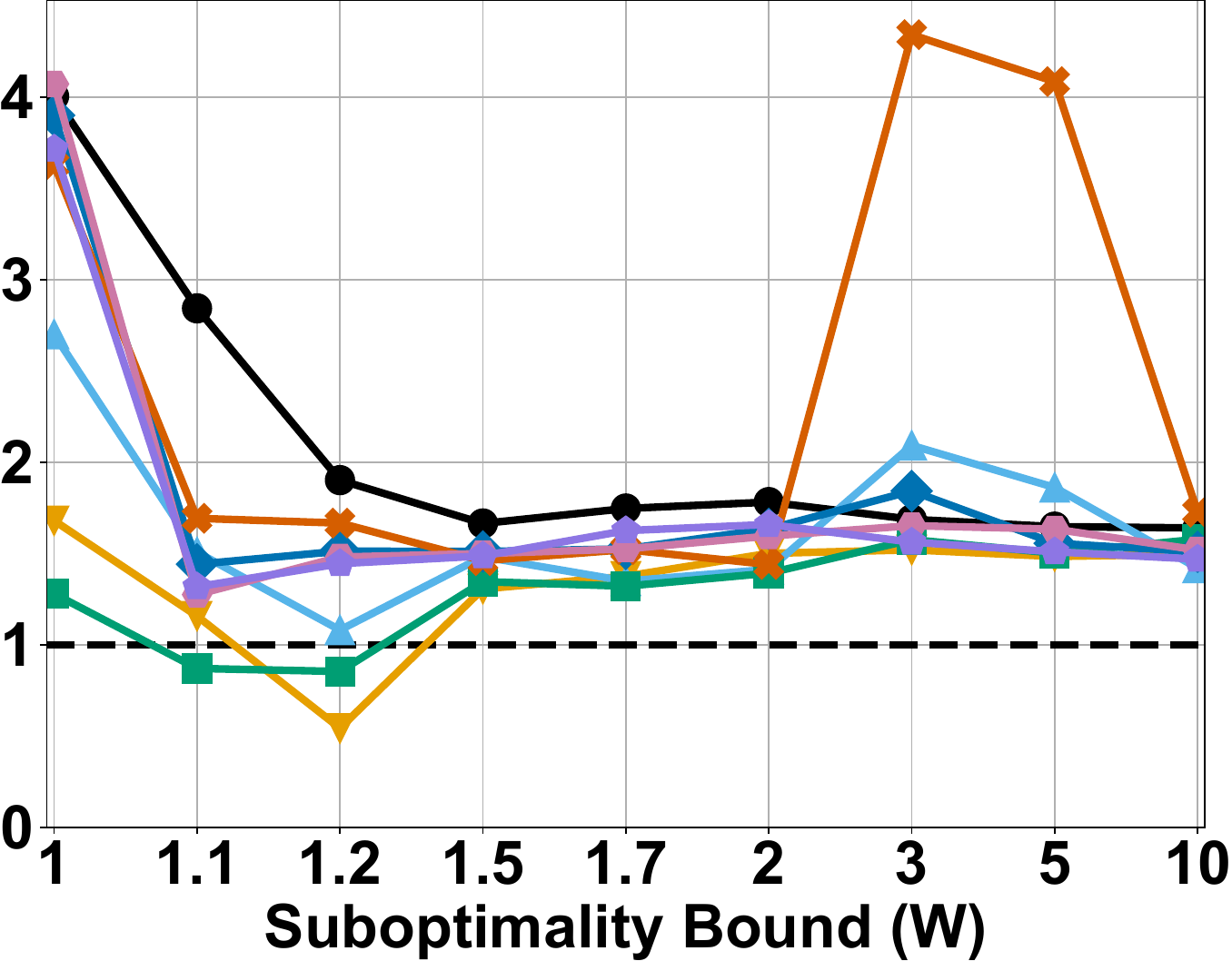}
        \label{fig:subfig2}
    }
    
    \caption{Comparison of the methods for strengthening the lower bound on STP}
    \label{fig:three_side_by_side}
\end{figure*}

\smallskip
\noindent{\bf Baselines.}
We compare all existing \bihs \bss algorithms: \bwa, \WBSstar, \wmm, and \bWae, as well as \wastar as a representative \UniHS \bss algorithm. For \bWae, we evaluate several values of \(\lambda \in \{0, \frac{1}{W^2}, \frac{1}{W}, 1, W\}\) to examine its impact on performance. Since \bWae with \(\lambda = 0\) uses the same priority rule as \bwa, we omit it from the results. All algorithms break ties in favor of higher \(g\)-values.



In addition, we report results for a \emph{tuned} \(\lambda\) value, denoted \(\lambda^*\), selected separately for each domain and each \(W\). To determine \(\lambda^*\), we used Optuna~\cite{AkibaSYOK19}, a hyperparameter optimization framework that automates the process of finding the best configuration parameters for a model. Optuna performed 50 trials; in each, it selected a \(\lambda\) value and ran \bWae on an additional set of 100 domain-specific instances disjoint from the test set. The value with the best overall tuning performance was used for testing. The resulting \(\lambda^*\) values are listed in Appendix~F.

\smallskip
\noindent{\bf Domains.}
We experimented on five domains: {\bf (1)} 100 instances of the 12-disk {\em 4-peg Towers of Hanoi} problem (ToH) with (10+2), (8+4), and (6+6) additive Pattern Databases (PDBs)~\citep{Felner2004}.
{\bf (2)} The standard 100 instances of the \emph{15 puzzle} problem (STP)~\citep{Korf1985} using the Manhattan distance heuristic.
{\bf (3)} 100 instances of the 18-pancake puzzle with the GAP heuristic~\citep{Helmert2010}.
To explore different heuristic strengths, we also used GAP-$n$ heuristics (for $n=1,2$) where the $n$ smallest pancakes are excluded from the heuristic computation.
We also experimented on the following \emph{Grid}-based pathfinding maps from the \emph{MovingAI} repository~\citep{sturtevant2012benchmarks} using octile distance as a heuristic and 1.5 cost for diagonal edges: {\bf (4)} 156 maps from Dragon Age Origins (DAO), each with different start and goal points (totaling 3149 instances), and {\bf (5)} mazes with varying widths (1200 instances). Heuristic values for states were set to the maximum of the heuristic function's estimate and $1$, the minimum edge cost (commonly referred to as $\epsilon$). Throughout the evaluation, we explore various values of $W$ ranging from 1 to 10, specifically $W \in \{1, 1.1, 1.2, 1.5, 1.7, 2, 3, 5, 10\}$.

\smallskip
\noindent{\bf Runtime.} The different priority functions and the GCD enhancement had a small impact on the constant time per node.
Table~\ref{tb:nps} presents the average node expansions per second for each algorithm (including the GCD enhancement) across various domains.
Although there are some variations between domains, the overall overhead of node expansions is comparable, and the trends observed below in node expansions are largely preserved when considering runtime as well.
Comprehensive runtime results, broken down by heuristic and weight, can be found in Appendix C.

The \alb enhancement adds computational overhead and is not always time-efficient. Thus, the initial experiments include the GCD improvement but exclude \alb, as this combination was generally the most efficient. The impact of both GCD and \alb---especially on CPU time---is analyzed later.

\smallskip
\noindent{\bf ToH Results.}
Table~\ref{tab:hanoi} shows the average number of node expansions in the ToH domain using (10+2), (8+4), and (6+6) additive PDB heuristics.
In ToH, for optimal search (\(W=1\)), \bWae\ (effectively \BAE) significantly outperforms the other algorithms, and in particular, it outperforms \wastar\ (effectively \astar) by a factor of 6 to 10, depending on the heuristic.
This demonstrates the importance of using the heuristic error \(d\), when available, for proving the optimality of solutions.
However, in accordance with our hypothesis, as \(W\) increases, the focus of the search shifts from proving optimality to finding a solution.
Thus, the heuristic error $d$ becomes less important, and can even hinder the search.
For example, with the (10+2) PDB, \(\lambda=W\) is never the best policy for \(W>1\) among all the examined values.
For \(W=1.1\), we observe that \(\lambda = 1\) and \(\lambda =\frac{1}{W}\) have roughly the same performance (28K expansions), but for \(W=1.2\), a lower value of \(\lambda=\frac{1}{W^2}\) yields the best results.
For \(W \geq 1.5\), it is better to ignore \(d\) altogether and use $\lambda=0$ (= \bwa).

For the weaker heuristics, (8+4) and (6+6), the trends are similar, although the transition point between the best values of \(\lambda\) is different.
For example, with the (8+4) heuristic, \(\lambda=W\) is best performing up to \(W=1.1\), while with the (6+6) heuristic, \(\lambda=W\) is better up to \(W=2\).
This observation is also in agreement with our hypothesis, as we see that when the heuristic gets weaker, higher values of \(\lambda\) tend to work better.
Notably, in this domain, \wastar\ was never the best choice, although the advantage of the \bihs\ algorithm diminishes as \(W\) increases.
Moreover, neither \wmm\ nor \WBSstar\ emerged as the best algorithms. \bwa\ (similar to \bWae with $\lambda = 0$) outperformed all other variants in some configurations with larger values of $W$, particularly when using the stronger heuristic (10+2). However, it struggled with smaller values of $W$, especially when combined with weaker heuristics (8+4 and 6+6). This pattern aligns with our hypothesis: when heuristics are strong and $W$ is large, less effort is needed to validate (sub)optimality, diminishing the impact of $\lambda$. Notably, the tuned \(\lambda\) values, \(\lambda^*\), achieved the best average performance across weights, often matching or outperforming the best fixed \(\lambda\) configurations.

\smallskip
\noindent{\bf STP, Mazes, and DAO.}
Table~\ref{tab:SMD} shows the results for STP, Mazes, and DAO. For $W = 1$, \bWae\ outperforms \wastar\ (and all other algorithms) by a factor of 5. However, as $W$ increases, the importance of $d$ diminishes, and lower $\lambda$ values become more effective. Notably, a small value of $\lambda = \frac{1}{W^2}$ performs well even for small $W$, due to the relatively strong MD heuristic.
To test this effect with a weaker heuristic, we repeated the experiment using MD-4 (ignoring four central tiles, like gap-4). Full results are in Appendix~E. Here, larger \(\lambda\) values performed better; for instance, with MD-4 and \(W=1.2\), \(\lambda=1\) led to fewer expansions than \(\lambda = \frac{1}{W^2}\), supporting our hypothesis.

For $W \geq 1.5$, \bwa---which does not use $d$---becomes the most effective algorithm, until $W = 10$, where $\lambda = \frac{1}{W}$ and $\lambda = \frac{1}{W^2}$ slightly outperform it. We also evaluated the heavy STP variant, a non-unit-cost domain where tile movement cost equals the tile's label. Results (see Appendix~D) follow similar trends; however, in this case, \bWae\ maintains a clear advantage over \wastar\ even at large $W$, reducing node expansions by a factor of 5 when $W = 10$.

For Mazes and DAO, the performance of {\em all} algorithms is relatively similar, particularly as \(W\) increases.
This is likely because the state space of these two domains is polynomial, which diminishes the differences between approaches, unlike in other domains where the state space is exponential. Nonetheless, even in Mazes, it is evident that smaller values of $\lambda$ become more effective as $W$ increases. For example, when $W = 1.1$, the larger values $\lambda = W$ and $\lambda = 1$ outperform the smaller values $\lambda = \frac{1}{W^2}$ and $\lambda = 0$, whereas the opposite holds for $W = 5$ and $W = 10$. This trend is consistent with our hypothesis as well.

In these domains as well, the tuned values \(\lambda^*\) consistently yielded strong performance, often matching or exceeding the best fixed \(\lambda\) configurations in terms of average results.

\smallskip
\noindent{\bf 18-pancake.}
Table~\ref{tab:pancake18} presents the results for the 18-pancake.
Here, \wastar\ shows superior performance for GAP, where the heuristic is highly accurate, as well as for large values of $W$ across all heuristics.
Nevertheless, the observed trend in the choice of $\lambda$ remains consistent: the best-performing value of $\lambda$ tends to decrease as heuristic accuracy increases and as $W$ becomes larger. Specifically, for GAP and GAP-1, smaller values such as $\lambda = \frac{1}{W^2}$ and $\lambda = 0$ perform as well as or better than all other $\lambda$ settings across all values of $W$. In contrast, for GAP-2, a weaker heuristic, larger values, such as $\lambda = W$ and $\lambda = 1$, are more effective when $W$ is small.

\smallskip
\noindent{\bf Solution Quality.} 
All algorithms produced solutions well within bounds. Even for $W=10$, average costs never exceeded $2.3 \cdot C^*$. Full results are available in Appendix C.

\subsection{Analysis of the Stronger Lower Bounds} 

Figures~\ref{fig:subfig3} and \ref{fig:subfig1} illustrate the ratio in node expansions achieved by GCD and \alb, respectively, relative to a variant that does not use them (the 1.0 line).
Additionally, the impact of \alb on the overall runtime is presented in Figure~\ref{fig:subfig2}.
These are representative results for the 15-puzzle (15-STP); other domains showed similar trends.
Notably, the GCD improvement imposes minimal computational overhead, so the reduction in node expansions directly corresponds to a decrease in runtime.
Both methods show the most significant reduction in nodes for small values of $W$.
For example, for $W \in \{1.1, 1.2\}$, GCD expanded approximately 85\% of the nodes for \bwa, \WBSstar, and \wmm (again, at no extra CPU cost).
\alb at $W=1.2$ achieves a reduction to 40\% nodes for \bwa and almost 50\% for \WBSstar.
However, the reduction is minor for other algorithms and becomes negligible when $W > 3$.
This trend highlights the diminishing complexity of proving solution optimality as $W$ increases.
Importantly, \alb only improves the overall runtime (indicated by being below the 1.0 line in Figure~\ref{fig:subfig2}) for $W \in \{1.1, 1.2\}$ for \WBSstar and when $W=1.2$ for \bwa; in all other scenarios, it adversely affects performance.



\section{Summary and Conclusions}

We integrated the core concept of Weighted A* (\wastar) into \bae, showing how the choice of \(\lambda\), which weights the heuristic error \(d\), affects performance.
We proposed tighter lower bound methods and showed that with a well-chosen \(\lambda\), \bWae outperforms related algorithms in many cases. We also provided practical guidelines for using \bWae and selecting \(\lambda\), and conducted experiments with tuned \(\lambda\) values.

Future work could develop \bihs and \bss algorithms inspired by strategies beyond \wastar, such as DPS~\cite{gilon2016dynamic}, EES~\cite{Thayer2011BoundedSS}, and XDP/XUP~\cite{Chen2021NecessaryAS}. \bWae could also be extended to use separate \(\lambda\) values for forward and backward search, which may be  beneficial with asymmetric heuristics. 


\section*{Acknowledgements}
The work of Shahaf Shperberg and Ariel Felner was supported by the Israel Science Foundation (ISF)
grant \#909/23. This work was also supported by Israel's Ministry of Innovation, Science and Technology (MOST) grant \#1001706842, in collaboration with Israel National Road Safety Authority and Netivei Israel, awarded to Shahaf Shperberg, and by BSF grant \#2024614 awarded to Shahaf Shperberg. The work of Dor Atzmon was supported by the Israel Science Foundation (ISF)
grant \#1511/25. This work was also supported by Israel's Ministry of Innovation, Science and Technology (MOST) grant \#6908 (Czech-Israeli cooperative scientific research) awarded to Dor Atzmon.

\bibliography{refs}

\newpage
\makeatletter
\@ifundefined{isChecklistMainFile}{
  \newif\ifreproStandalone
  \reproStandalonetrue
}{
  \newif\ifreproStandalone
  \reproStandalonefalse
}
\makeatother

\ifreproStandalone
\documentclass[letterpaper]{article}
\usepackage[submission]{aaai2026}
\setlength{\pdfpagewidth}{8.5in}
\setlength{\pdfpageheight}{11in}
\usepackage{times}
\usepackage{helvet}
\usepackage{courier}
\usepackage{xcolor}
\frenchspacing

\begin{document}
\fi
\setlength{\leftmargini}{20pt}
\makeatletter\def\@listi{\leftmargin\leftmargini \topsep .5em \parsep .5em \itemsep .5em}
\def\@listii{\leftmargin\leftmarginii \labelwidth\leftmarginii \advance\labelwidth-\labelsep \topsep .4em \parsep .4em \itemsep .4em}
\def\@listiii{\leftmargin\leftmarginiii \labelwidth\leftmarginiii \advance\labelwidth-\labelsep \topsep .4em \parsep .4em \itemsep .4em}\makeatother

\setcounter{secnumdepth}{0}
\renewcommand\thesubsection{\arabic{subsection}}
\renewcommand\labelenumi{\thesubsection.\arabic{enumi}}

\newcounter{checksubsection}
\newcounter{checkitem}[checksubsection]

\newcommand{\checksubsection}[1]{%
  \refstepcounter{checksubsection}%
  \paragraph{\arabic{checksubsection}. #1}%
  \setcounter{checkitem}{0}%
}

\newcommand{\checkitem}{%
  \refstepcounter{checkitem}%
  \item[\arabic{checksubsection}.\arabic{checkitem}.]%
}
\newcommand{\question}[2]{\normalcolor\checkitem #1 #2 \color{blue}}
\newcommand{\ifyespoints}[1]{\makebox[0pt][l]{\hspace{-15pt}\normalcolor #1}}

\section*{Reproducibility Checklist}

\vspace{1em}
\hrule
\vspace{1em}








\checksubsection{General Paper Structure}
\begin{itemize}

\question{Includes a conceptual outline and/or pseudocode description of AI methods introduced}{(yes/partial/no/NA)}
yes

\question{Clearly delineates statements that are opinions, hypothesis, and speculation from objective facts and results}{(yes/no)}
yes

\question{Provides well-marked pedagogical references for less-familiar readers to gain background necessary to replicate the paper}{(yes/no)}
yes

\end{itemize}
\checksubsection{Theoretical Contributions}
\begin{itemize}

\question{Does this paper make theoretical contributions?}{(yes/no)}
yes

	\ifyespoints{\vspace{1.2em}If yes, please address the following points:}
        \begin{itemize}
	
	\question{All assumptions and restrictions are stated clearly and formally}{(yes/partial/no)}
	yes

	\question{All novel claims are stated formally (e.g., in theorem statements)}{(yes/partial/no)}
	yes

	\question{Proofs of all novel claims are included}{(yes/partial/no)}
	yes

	\question{Proof sketches or intuitions are given for complex and/or novel results}{(yes/partial/no)}
	yes

	\question{Appropriate citations to theoretical tools used are given}{(yes/partial/no)}
	yes

	\question{All theoretical claims are demonstrated empirically to hold}{(yes/partial/no/NA)}
	yes

	\question{All experimental code used to eliminate or disprove claims is included}{(yes/no/NA)}
	yes
	
	\end{itemize}
\end{itemize}

\checksubsection{Dataset Usage}
\begin{itemize}

\question{Does this paper rely on one or more datasets?}{(yes/no)}
yes

\ifyespoints{If yes, please address the following points:}
\begin{itemize}

	\question{A motivation is given for why the experiments are conducted on the selected datasets}{(yes/partial/no/NA)}
	yes

	\question{All novel datasets introduced in this paper are included in a data appendix}{(yes/partial/no/NA)}
	NA

	\question{All novel datasets introduced in this paper will be made publicly available upon publication of the paper with a license that allows free usage for research purposes}{(yes/partial/no/NA)}
	NA

	\question{All datasets drawn from the existing literature (potentially including authors' own previously published work) are accompanied by appropriate citations}{(yes/no/NA)}
	yes

	\question{All datasets drawn from the existing literature (potentially including authors' own previously published work) are publicly available}{(yes/partial/no/NA)}
	yes

	\question{All datasets that are not publicly available are described in detail, with explanation why publicly available alternatives are not scientifically satisficing}{(yes/partial/no/NA)}
	NA

\end{itemize}
\end{itemize}

\checksubsection{Computational Experiments}
\begin{itemize}

\question{Does this paper include computational experiments?}{(yes/no)}
yes

\ifyespoints{If yes, please address the following points:}
\begin{itemize}

	\question{This paper states the number and range of values tried per (hyper-) parameter during development of the paper, along with the criterion used for selecting the final parameter setting}{(yes/partial/no/NA)}
	yes

	\question{Any code required for pre-processing data is included in the appendix}{(yes/partial/no)}
	no

	\question{All source code required for conducting and analyzing the experiments is included in a code appendix}{(yes/partial/no)}
	no

	\question{All source code required for conducting and analyzing the experiments will be made publicly available upon publication of the paper with a license that allows free usage for research purposes}{(yes/partial/no)}
	yes
        
	\question{All source code implementing new methods have comments detailing the implementation, with references to the paper where each step comes from}{(yes/partial/no)}
	yes

	\question{If an algorithm depends on randomness, then the method used for setting seeds is described in a way sufficient to allow replication of results}{(yes/partial/no/NA)}
	NA

	\question{This paper specifies the computing infrastructure used for running experiments (hardware and software), including GPU/CPU models; amount of memory; operating system; names and versions of relevant software libraries and frameworks}{(yes/partial/no)}
	yes

	\question{This paper formally describes evaluation metrics used and explains the motivation for choosing these metrics}{(yes/partial/no)}
	yes

	\question{This paper states the number of algorithm runs used to compute each reported result}{(yes/no)}
	yes

	\question{Analysis of experiments goes beyond single-dimensional summaries of performance (e.g., average; median) to include measures of variation, confidence, or other distributional information}{(yes/no)}
	no

	\question{The significance of any improvement or decrease in performance is judged using appropriate statistical tests (e.g., Wilcoxon signed-rank)}{(yes/partial/no)}
	no

	\question{This paper lists all final (hyper-)parameters used for each model/algorithm in the paper’s experiments}{(yes/partial/no/NA)}
	yes

\end{itemize}
\end{itemize}
\ifreproStandalone
\end{document}
\fi
\newpage

\onecolumn

\appendix
\section*{\LARGE
Supplementary Materials for the ``Bidirectional Bounded-Suboptimal Heuristic Search with Consistent Heuristics" Paper}

\section*{Appendix A: Bounded suboptilmality of \bwa}
We prove below that \bwa is guaranteed to return bounded suboptimal algorithms. The proof is a direct generalization of the proof of \wastar.
\begin{theorem}
\bwa is bounded-suboptimal.
\end{theorem}

\begin{proof}

We need to prove that when a solution is returned with cost $U \leq LB_{W}$ then it holds that $LB_{W} \leq W \cdot C^*$.
Without loss of generality, assume that the $\max$ term is $\prWminf$.
Let $x$ be the node for which $\prWF(x)=\prWminf$ and let $y$ be the node for which $\fF(y)=\gF(y)+\hF(y)=\fminf$, where $\fminf$ is the minimal $f$-value in \openF.

Given an admissible heuristic, it holds that $\fF(y) \leq \Cstar$.
Note that the open lists are sorted according to $\prWD$ so $\prWF(x) \leq \prWF(y)$.
As a result, we get:
\begin{align*}
    U &\leq \prWminf \\
    &=\gF(x)+W \cdot \hF(x)\\
    &\leq \gF(y)+W \cdot \hF(y)\\
    &\leq W \cdot \gF(y)+W \cdot \hF(y)\\
    &= W \cdot \fF(y) 
    \leq W \cdot \Cstar
\end{align*}

Thus, \bwa returns solutions that are bounded by $W \cdot \Cstar$, making it a \bss algorithm.
\end{proof}

\section*{Appendix B: Absence of Node Re-expansion with Consistent Heuristics}
\setcounter{theorem}{2}
\begin{theorem}
Given consistent heuristics \bWae finds a bounded-suboptimal solution even if never re-expand nodes.
\end{theorem}
\begin{proof}
To prove this lemma, it suffices to show that every node is expanded with a bounded-suboptimal path. Specifically, we need to demonstrate two conditions:
\begin{itemize}
    \item[(i)] Whenever a node $u$ is expanded in the forward direction ($F$), it holds that $\gF(u) \leq W \cdot c(\start,u)$.
    \item[(ii)] Similarly, whenever a node $u$ is expanded in the backward direction ($B$), it holds that $\gB(u) \leq W \cdot c(u,\goal)$.
\end{itemize}

We will prove condition (i); the proof for condition (ii) is symmetrical.

Assume, for the sake of contradiction, that \bWae expands a node $u$ such that $\gF(u) > W \cdot c(\start,u)$. Further, assume that $u$ is the first node expanded in this manner. Note that $u \neq \start$, since the start node is expanded with an optimal cost of zero.

Given that $u$ is the first node expanded with an unbounded suboptimal path cost, there must exist another node $u' \in \openF$ that lies on the optimal path from $\start$ to $u$, satisfying $\gF(u') + c(u',u) \leq W \cdot c(\start,u)$. Since $u'$ is on the optimal path, we have $c(\start,u) = c(\start,u') + c(u',u)$. Now, because $\gF(u) > W \cdot c(\start,u)$, it follows that:
\[
\gF(u) > W \cdot c(\start,u') + W \cdot c(u',u).
\]
Since $\gF(u') \leq W \cdot c(\start,u')$, we have:
\[
\gF(u) > \gF(u') + W \cdot c(u',u).
\]
Additionally, since $c(u',u) \geq 0$, we know $\gF(u') < \gF(u)$.

Thus, we can evaluate the $\bF$ values:
\[
\bWF(u') = \gF(u') + W\cdot \hF(u') + \lambda (\gF(u')-\hB(u')).
\]
Substituting the known inequalities, we get:
\begin{align*}
\bWF(u') < &\gF(u) - W \cdot c(u',u)+  W\cdot \hF(u') \\&+ \lambda (\gF(u) - W \cdot c(u',u) -\hB(u')).
\end{align*}
Using the consistency of the heuristics, $\hF(u') \leq c(u',u) + \hF(u)$ and $\hB(u) \leq c(u',u) + \hB(u')$, we obtain:
\begin{align*}
\bWF(u') < &\gF(u)  + W\cdot \hF(u) \\&+ \lambda (\gF(u) - \hB(u) - (W - 1)c(u',u)).
\end{align*}
Since $W \geq 1$ and $c(u',u) \geq 0$, this simplifies to:
\[
\bWF(u') < \gF(u) + W\cdot \hF(u) + \lambda (\gF(u) - \hB(u)) = \bWF(u).
\]
This contradicts the assumption that $u$ is expanded before $u'$.

Consequently, every path discovered during the search is guaranteed to be bounded-suboptimal. Since the algorithm terminates only once a path is found whose cost is greater or equal to the lower bound, it must return a path of cost that does not exceed $W\cdot\Cstar$.
\end{proof}

\section*{Appendix C: Runetime and Solution Quality Results}
This section includes the full runtime and solution quality results of our experiments. Tables 5-9 report the average runtime for each domain and algorithm, whereas tables 10-14 show the quality of solutions found by each algorithm.

Experiments were conducted on a 92-machine cluster, each equipped with an AMD EPYC 7763 CPU (256 hyperthreads) and 1000GB of RAM. Each combination of search algorithm and problem instance was run on a single thread with a 60GB memory limit. While runtime measurements may be somewhat noisy due to the distributed setup, all algorithms follow a similar operational pattern: computing lower bounds, selecting nodes via a priority function, expanding nodes, and maintaining the incumbent solution. These operations incur comparable overhead across algorithms, as confirmed by the code, with the exception of \wmm, which is inherently slower due to its lower-bound computation that involves iterating over buckets in the open list. Therefore, although we report runtime results, we regard node expansions as a more reliable indicator of performance.

\begin{table}[ht]
\caption{Average runtime in seconds on STP domain.}
\centering
\begin{tabular}{lrrrrrrrrr}
\toprule
\multicolumn{1}{c}{Algorithm} & \multicolumn{1}{c}{1} & \multicolumn{1}{c}{1.1} & \multicolumn{1}{c}{1.2} & \multicolumn{1}{c}{1.5} & \multicolumn{1}{c}{1.7} & \multicolumn{1}{c}{2} & \multicolumn{1}{c}{3} & \multicolumn{1}{c}{5} & \multicolumn{1}{c}{10} \\
\midrule
\wastar & 59.28 & 33.29 & 10.36 & 0.80 & 0.22 & 0.08 & \textbf{0.02} & \textbf{0.01} & \textbf{0.01} \\
\bwa & 71.31 & 31.22 & 2.50 & \textbf{0.20} & \textbf{0.12} & \textbf{0.07} & \textbf{0.02} & \textbf{0.01} & \textbf{0.01} \\
\wmm & 73.55 & 14.94 & 3.06 & 0.46 & 0.51 & 0.82 & 9.33 & 11.33 & 0.12 \\
\WBSstar & 45.54 & 9.52 & \textbf{1.39} & 0.21 & 0.14 & 0.08 & \textbf{0.02} & \textbf{0.01} & \textbf{0.01} \\
\bWae$\frac{1}{W^2}$ & 12.45 & \textbf{3.69} & 1.55 & 0.43 & 0.20 & \textbf{0.07} & 0.03 & \textbf{0.01} & \textbf{0.01} \\
\bWae$\frac{1}{W}$ & 12.35 & 4.07 & 2.14 & 0.65 & 0.41 & 0.14 & 0.03 & \textbf{0.01} & \textbf{0.01} \\
\bWae$1$ & \textbf{11.56} & 4.17 & 2.92 & 1.19 & 0.85 & 0.53 & 0.09 & 0.02 & \textbf{0.01} \\
\bWae$W$ & 13.06 & 4.36 & 3.75 & 2.75 & 2.36 & 2.16 & 4.16 & 3.57 & 2.12 \\
\bWae$\lambda^*$ & 14.88 & 5.00 & 1.56 & 0.40 & 0.16 & 0.13 & \textbf{0.02} & \textbf{0.01} & \textbf{0.01} \\

\bottomrule
\end{tabular}
\end{table}

\begin{table}[ht]
\caption{Average runtime in milliseconds on DAO domain.}
\centering
\begin{tabular}{lrrrrrrrrr}
\toprule
\multicolumn{1}{c}{Algorithm} & \multicolumn{1}{c}{1} & \multicolumn{1}{c}{1.1} & \multicolumn{1}{c}{1.2} & \multicolumn{1}{c}{1.5} & \multicolumn{1}{c}{1.7} & \multicolumn{1}{c}{2} & \multicolumn{1}{c}{3} & \multicolumn{1}{c}{5} & \multicolumn{1}{c}{10} \\
\midrule
\wastar & \textbf{2.80} & \textbf{2.66} & \textbf{2.62} & \textbf{2.12} & \textbf{1.90} & \textbf{1.67} & \textbf{1.41} & \textbf{1.25} & \textbf{1.14} \\
\bwa & 4.10 & 3.66 & 3.19 & 2.76 & 2.52 & 2.21 & 1.88 & 1.67 & 1.50 \\
\wmm & 504.99 & 447.72 & 190.39 & 116.72 & 78.09 & 91.19 & 13.76 & 3.82 & 3.22 \\
\WBSstar & 3.08 & 2.85 & 2.60 & 2.13 & 1.97 & 1.72 & 1.56 & 1.44 & 1.37 \\
\bWae$\frac{1}{W^2}$ & 3.96 & 3.81 & 3.59 & 2.89 & 2.52 & 2.31 & 1.97 & 1.72 & 1.54 \\
\bWae$\frac{1}{W}$ & 4.22 & 4.17 & 3.72 & 3.10 & 2.78 & 2.20 & 1.85 & 1.60 & 1.42 \\
\bWae$1$ & 3.84 & 3.85 & 3.67 & 3.17 & 2.92 & 2.60 & 2.10 & 1.78 & 1.56 \\
\bWae$W$ & 3.86 & 3.94 & 3.94 & 3.46 & 3.38 & 3.15 & 2.89 & 2.80 & 2.65 \\
\bWae$\lambda^*$ & 3.90 & 3.20 & 3.18 & 2.76 & 2.19 & 1.96 & 1.71 & 1.68 & 1.41 \\

\bottomrule
\end{tabular}
\end{table}

\begin{table}[ht]
\caption{Average runtime in milliseconds on Mazes domain.}
\centering
\begin{tabular}{lrrrrrrrrr}
\toprule
\multicolumn{1}{c}{Algorithm} & \multicolumn{1}{c}{1} & \multicolumn{1}{c}{1.1} & \multicolumn{1}{c}{1.2} & \multicolumn{1}{c}{1.5} & \multicolumn{1}{c}{1.7} & \multicolumn{1}{c}{2} & \multicolumn{1}{c}{3} & \multicolumn{1}{c}{5} & \multicolumn{1}{c}{10} \\
\midrule
\wastar & 47.45 & 49.59 & 49.21 & \textbf{47.04} & \textbf{46.71} & \textbf{44.47} & \textbf{39.89} & \textbf{34.59} & \textbf{29.87} \\
\bwa & 63.31 & 66.27 & 66.99 & 60.98 & 60.10 & 55.65 & 49.30 & 43.74 & 38.75 \\
\wmm & 1,649 & 2,334 & 3,033 & 7,683 & 12,302 & 17,147 & 19,502 & 10,452 & 289.27 \\
\WBSstar & \textbf{45.08} & \textbf{48.83} & \textbf{49.01} & \textbf{47.04} & 46.36 & 39.38 & 35.81 & 32.03 & 30.08 \\
\bWae$\frac{1}{W^2}$ & 55.25 & 58.88 & 60.95 & 63.45 & 62.79 & 60.86 & 54.83 & 46.69 & 41.52 \\
\bWae$\frac{1}{W}$ & 46.81 & 50.20 & 58.42 & 59.96 & 59.81 & 58.58 & 54.18 & 47.63 & 41.87 \\
\bWae$1$ & 48.52 & 50.10 & 49.30 & 48.43 & 52.39 & 50.15 & 47.84 & 45.79 & 42.34 \\
\bWae$W$ & 49.87 & 50.95 & 51.62 & 53.20 & 54.12 & 52.22 & 51.83 & 52.75 & 55.47 \\
\bWae$\lambda^*$ & 46.24 & 49.82 & 83.40 & 51.24 & 51.02 & 50.33 & 42.78 & 41.27 & 40.73 \\

\bottomrule
\end{tabular}
\end{table}

\begin{table}[ht]
\caption{Average runtime in milliseconds on 18 Pancake domain, with GAP to GAP-2 heuristics.}
\centering
\begin{adjustbox}
{width=0.99\textwidth,center}
\begin{tabular}{@{}l|rrrrrrrrr|rrrrrrrrr|rrrrrrrrr@{}}
\toprule
\multicolumn{1}{c}{} & \multicolumn{9}{c}{GAP} & \multicolumn{9}{c}{GAP-1} & \multicolumn{9}{c}{GAP-2} \\ \midrule
\multicolumn{1}{c|}{Algorithm} & \multicolumn{1}{c}{1} & \multicolumn{1}{c}{1.1} & \multicolumn{1}{c}{1.2} & \multicolumn{1}{c}{1.5} & \multicolumn{1}{c}{1.7} & \multicolumn{1}{c}{2} & \multicolumn{1}{c}{3} & \multicolumn{1}{c}{5} & \multicolumn{1}{c|}{10} & \multicolumn{1}{c}{1} & \multicolumn{1}{c}{1.1} & \multicolumn{1}{c}{1.2} & \multicolumn{1}{c}{1.5} & \multicolumn{1}{c}{1.7} & \multicolumn{1}{c}{2} & \multicolumn{1}{c}{3} & \multicolumn{1}{c}{5} & \multicolumn{1}{c|}{10} & \multicolumn{1}{c}{1} & \multicolumn{1}{c}{1.1} & \multicolumn{1}{c}{1.2} & \multicolumn{1}{c}{1.5} & \multicolumn{1}{c}{1.7} & \multicolumn{1}{c}{2} & \multicolumn{1}{c}{3} & \multicolumn{1}{c}{5} & \multicolumn{1}{c}{10} \\
\cmidrule(lr){2-10} \cmidrule(lr){11-19} \cmidrule(lr){20-28}
\wastar & 3.57 & 2.75 & 0.91 & \textbf{0.56} & \textbf{0.54} & \textbf{0.46} & \textbf{0.52} & \textbf{0.50} & \textbf{0.50} & 1,013 & 602 & 63.21 & 4.27 & 2.20 & 0.92 & 0.69 & 0.64 & 0.64 & 167,115 & 145,095 & 30,996 & 1,688 & 411 & 68.98 & 20.45 & 17.06 & 16.74 \\
\bwa & 2.58 & \textbf{1.39} & \textbf{0.87} & 0.65 & 0.62 & 0.58 & 0.58 & 0.64 & 0.60 & 715 & 57.04 & 10.50 & \textbf{0.71} & \textbf{0.59} & \textbf{0.52} & \textbf{0.51} & \textbf{0.51} & \textbf{0.51} & 134,641 & 30,062 & 3,708 & 70.74 & 17.86 & 4.71 & \textbf{2.34} & \textbf{2.14} & \textbf{2.20} \\
\wmm & 5.14 & 3.23 & 4.22 & 25.38 & 81.93 & 394 & 99.29 & 0.79 & 0.58 & 671 & 55.45 & 15.59 & 26.90 & 63.75 & 174.57 & 2,204 & 9.83 & 0.65 & 67,578 & 11,739 & 1,782 & 155 & 74.26 & 279 & 9,574 & 963 & 5.72 \\
\WBSstar & 2.85 & 1.83 & 1.11 & 0.79 & 0.76 & 0.73 & 0.73 & 0.72 & 0.72 & 316 & 36.19 & 8.58 & 0.77 & 0.73 & 0.66 & 0.65 & 0.64 & 0.64 & 59,861 & 16,464 & 1,735 & 47.65 & \textbf{8.66} & \textbf{4.14} & 2.82 & 2.64 & 2.73 \\
\bWae$\frac{1}{W^2}$ & 2.73 & 1.72 & 1.27 & 0.73 & 0.71 & 0.64 & 0.63 & 0.63 & 0.63 & 29.11 & \textbf{8.92} & 5.97 & 1.06 & 0.82 & 0.67 & 0.59 & 0.58 & 0.58 & \textbf{2,681} & 913 & 143 & 51.55 & 24.20 & 8.48 & 2.64 & 2.62 & 2.70 \\
\bWae$\frac{1}{W}$ & \textbf{2.71} & 1.74 & 1.30 & 0.76 & 0.72 & 0.63 & 0.63 & 0.62 & 0.63 & 29.59 & 9.61 & 8.53 & 1.77 & 1.01 & 0.68 & 0.59 & 0.57 & 0.57 & 2,688 & \textbf{725} & 140 & 145 & 35.25 & 13.31 & 2.59 & 2.58 & 2.67 \\
\bWae$1$ & 2.73 & 1.80 & 1.31 & 0.81 & 0.74 & 0.67 & 0.64 & 0.63 & 0.64 & 29.31 & 10.67 & 9.62 & 2.49 & 1.74 & 0.86 & 0.62 & 0.58 & 0.58 & 2,760 & 739 & 199 & 268 & 196 & 36.53 & 5.28 & 2.96 & 2.87 \\
\bWae$W$ & 2.72 & 1.77 & 1.35 & 0.95 & 0.91 & 0.74 & 0.79 & 0.93 & 0.78 & 30.65 & 11.68 & 15.56 & 34.36 & 24.15 & 15.48 & 6.11 & 3.38 & 2.67 & 3,379 & 918 & 409 & 1,716 & 3,229 & 11,604 & 2,023 & 1,038 & 586 \\
\bWae$\lambda^*$ & 2.90 & 1.75 & 1.03 & 0.80 & 0.67 & 0.61 & 0.59 & 0.63 & 0.61 & \textbf{26.95} & 10.18 & \textbf{4.31} & 1.33 & 0.96 & 0.82 & 0.84 & 0.79 & 0.69 & 2,776 & 877 & \textbf{123} & \textbf{45.63} & 17.26 & 8.64 & 3.00 & 2.84 & 3.09 \\

\bottomrule
\end{tabular}
\end{adjustbox}
\end{table}

\begin{table}[ht]
\caption{Average runtime in milliseconds on 12-disks ToH domain, with $(10+2)$, $(8+4)$, $(6+6)$ PDBS.}
\centering
\begin{adjustbox}{width=0.99\textwidth,center}
\begin{tabular}{l|rrrrrrrrr|rrrrrrrrr|rrrrrrrrr}
\toprule
\multicolumn{1}{c}{} & \multicolumn{9}{c}{ToH-12 (10+2)} & \multicolumn{9}{c}{ToH-12 (8+4)} & \multicolumn{9}{c}{ToH-12 (6+6)} \\ \midrule
\multicolumn{1}{c|}{Algorithm} & \multicolumn{1}{c}{1} & \multicolumn{1}{c}{1.1} & \multicolumn{1}{c}{1.2} & \multicolumn{1}{c}{1.5} & \multicolumn{1}{c}{1.7} & \multicolumn{1}{c}{2} & \multicolumn{1}{c}{3} & \multicolumn{1}{c}{5} & \multicolumn{1}{c|}{10} & \multicolumn{1}{c}{1} & \multicolumn{1}{c}{1.1} & \multicolumn{1}{c}{1.2} & \multicolumn{1}{c}{1.5} & \multicolumn{1}{c}{1.7} & \multicolumn{1}{c}{2} & \multicolumn{1}{c}{3} & \multicolumn{1}{c}{5} & \multicolumn{1}{c|}{10} & \multicolumn{1}{c}{1} & \multicolumn{1}{c}{1.1} & \multicolumn{1}{c}{1.2} & \multicolumn{1}{c}{1.5} & \multicolumn{1}{c}{1.7} & \multicolumn{1}{c}{2} & \multicolumn{1}{c}{3} & \multicolumn{1}{c}{5} & \multicolumn{1}{c}{10} \\
\cmidrule(lr){2-10} \cmidrule(lr){11-19} \cmidrule(lr){20-28}
\wastar & 518 & 245 & 86.28 & 5.33 & 2.78 & 1.58 & \textbf{0.54} & 0.56 & 0.57 & 4,658 & 3,641 & 2,429 & 995 & 628 & 521 & 30.38 & 13.16 & 19.09 & 8,136 & 7,641 & 6,502 & 3,405 & 2,994 & 2,350 & 2,128 & 1,182 & 326 \\
\bwa & 534 & 226 & 72.69 & 3.65 & 2.07 & 0.95 & 0.59 & 0.55 & 0.59 & 3,343 & 2,346 & 1,533 & 424 & 227 & 63.35 & \textbf{15.24} & \textbf{7.29} & \textbf{5.16} & 6,025 & 4,604 & 3,547 & 1,864 & 1,108 & 527 & \textbf{142} & \textbf{81} & 47.50 \\
\wmm & 793 & 358 & 112 & 5.16 & 3.30 & 2.28 & 6.52 & 5.40 & 0.84 & 2,406 & 2,122 & 1,712 & 611 & 342 & 94.53 & 22.37 & 12.66 & 10.44 & 2,142 & 2,428 & 2,526 & 2,110 & 1,473 & 885 & 508 & 202 & 93.06 \\
\WBSstar & 334 & 137 & 45.52 & \textbf{3.41} & \textbf{1.86} & \textbf{0.92} & 0.55 & \textbf{0.51} & \textbf{0.56} & 2,327 & 1,670 & 1,142 & 337 & 172 & \textbf{49.96} & 15.36 & 7.74 & 5.70 & 4,227 & 3,422 & 2,736 & 1,391 & 902 & 491 & 193 & 114 & 64.83 \\
\bWae$\frac{1}{W^2}$ & \textbf{75.68} & \textbf{47.66} & \textbf{27.01} & 8.85 & 4.01 & 1.80 & 0.66 & 0.60 & 0.68 & \textbf{365} & 398 & 328 & \textbf{201} & \textbf{130} & 55.03 & 18.42 & 7.94 & 5.84 & \textbf{812} & 966 & 947 & 833 & \textbf{723} & \textbf{453} & 206 & 110 & \textbf{64} \\
\bWae$\frac{1}{W}$ & 94.91 & 53.87 & 32.47 & 11.75 & 6.23 & 2.60 & 0.69 & 0.64 & 0.63 & 494 & 444 & 368 & 239 & 180 & 85.51 & 23.97 & 9.28 & 5.84 & 1,051 & 1,076 & 1,015 & 870 & 809 & 583 & 289 & 147 & 74.67 \\
\bWae$1$ & 97.12 & 53.70 & 37.08 & 16.73 & 10.70 & 4.94 & 0.93 & 0.69 & 0.66 & 434 & 371 & 325 & 255 & 221 & 152 & 41.28 & 14.22 & 6.74 & 950 & 921 & 885 & 822 & 799 & 682 & 413 & 233 & 119 \\
\bWae$W$ & 92.77 & 52.04 & 39.71 & 22.73 & 18.72 & 11.94 & 5.44 & 2.72 & 1.98 & 485 & 351 & 321 & 260 & 241 & 203 & 159 & 123 & 81.30 & 1,059 & \textbf{864} & \textbf{838} & 763 & 743 & 663 & 584 & 510 & 400 \\
\bWae$\lambda^*$ & 129 & 48.39 & 30.67 & 7.31 & 2.84 & 1.64 & 0.86 & 0.76 & 0.77 & 611 & \textbf{281} & \textbf{264} & 214 & 142 & 124 & 19.16 & 9.71 & 6.67 & 1,120 & 1,751 & 1,147 & \textbf{704} & 997 & 769 & 250 & 145 & 124 \\

\bottomrule
\end{tabular}
\end{adjustbox}
\end{table}

\begin{table}[ht]
\caption{Average solution quality on STP domain.}
\centering
\begin{tabular}{lrrrrrrrrr}
\toprule
\multicolumn{1}{c}{Algorithm} & \multicolumn{1}{c}{1} & \multicolumn{1}{c}{1.1} & \multicolumn{1}{c}{1.2} & \multicolumn{1}{c}{1.5} & \multicolumn{1}{c}{1.7} & \multicolumn{1}{c}{2} & \multicolumn{1}{c}{3} & \multicolumn{1}{c}{5} & \multicolumn{1}{c}{10} \\
\midrule
\wastar & 1.000 & \textbf{1.001} & \textbf{1.011} & 1.063 & 1.103 & 1.199 & 1.468 & 1.848 & 2.242 \\
\bwa & 1.000 & 1.002 & 1.015 & 1.100 & 1.140 & 1.221 & 1.468 & 1.806 & 2.165 \\
\wmm & 1.000 & 1.002 & 1.017 & 1.106 & 1.182 & 1.298 & 1.616 & 1.489 & 1.892 \\
\WBSstar & 1.000 & 1.003 & 1.021 & 1.102 & 1.140 & 1.223 & 1.469 & 1.806 & 2.165 \\
\bWae$\frac{1}{W^2}$ & 1.000 & 1.005 & 1.020 & 1.064 & 1.091 & 1.153 & 1.393 & 1.714 & 2.027 \\
\bWae$\frac{1}{W}$ & 1.000 & 1.006 & 1.017 & 1.051 & 1.069 & 1.122 & 1.325 & 1.668 & 2.039 \\
\bWae$1$ & 1.000 & 1.006 & 1.015 & 1.032 & 1.042 & 1.070 & 1.210 & 1.425 & 1.787 \\
\bWae$W$ & 1.000 & 1.009 & 1.014 & \textbf{1.017} & \textbf{1.023} & \textbf{1.030} & \textbf{1.067} & \textbf{1.106} & \textbf{1.139} \\
\bWae$\lambda^*$ & 1.000 & 1.003 & 1.017 & 1.098 & 1.132 & 1.181 & 1.419 & 1.714 & 2.035 \\

\bottomrule
\end{tabular}
\end{table}

\begin{table}[ht]
\caption{Average solution quality on DAO domain.}
\centering
\begin{tabular}{lrrrrrrrrr}
\toprule
\multicolumn{1}{c}{Algorithm} & \multicolumn{1}{c}{1} & \multicolumn{1}{c}{1.1} & \multicolumn{1}{c}{1.2} & \multicolumn{1}{c}{1.5} & \multicolumn{1}{c}{1.7} & \multicolumn{1}{c}{2} & \multicolumn{1}{c}{3} & \multicolumn{1}{c}{5} & \multicolumn{1}{c}{10} \\
\midrule
\wastar & 1.000 & \textbf{1.006} & \textbf{1.012} & \textbf{1.025} & \textbf{1.026} & \textbf{1.034} & \textbf{1.044} & 1.065 & 1.077 \\
\bwa & 1.000 & 1.013 & 1.025 & 1.045 & 1.047 & 1.055 & 1.067 & 1.089 & 1.102 \\
\wmm & 1.000 & 1.019 & 1.038 & 1.071 & 1.075 & 1.090 & 1.094 & 1.109 & 1.114 \\
\WBSstar & 1.000 & 1.024 & 1.037 & 1.054 & 1.056 & 1.071 & 1.083 & 1.108 & 1.124 \\
\bWae$\frac{1}{W^2}$ & 1.000 & 1.009 & 1.017 & 1.034 & 1.045 & 1.054 & 1.070 & 1.092 & 1.108 \\
\bWae$\frac{1}{W}$ & 1.000 & 1.009 & 1.017 & 1.032 & 1.037 & 1.050 & 1.069 & 1.091 & 1.107 \\
\bWae$1$ & 1.000 & 1.010 & 1.017 & 1.030 & 1.034 & 1.042 & 1.061 & 1.079 & 1.095 \\
\bWae$W$ & 1.000 & 1.010 & 1.016 & 1.028 & 1.031 & 1.039 & \textbf{1.044} & \textbf{1.050} & \textbf{1.055} \\
\bWae $\lambda$ & 1.000 & 1.011 & 1.022 & 1.043 & 1.049 & 1.056 & 1.070 & 1.092 & 1.108 \\

\bottomrule
\end{tabular}
\end{table}

\begin{table}[ht]
\caption{Average solution quality on Mazes domain.}
\centering
\begin{tabular}{lrrrrrrrrr}
\toprule
\multicolumn{1}{c}{Algorithm} & \multicolumn{1}{c}{1} & \multicolumn{1}{c}{1.1} & \multicolumn{1}{c}{1.2} & \multicolumn{1}{c}{1.5} & \multicolumn{1}{c}{1.7} & \multicolumn{1}{c}{2} & \multicolumn{1}{c}{3} & \multicolumn{1}{c}{5} & \multicolumn{1}{c}{10} \\
\midrule
\wastar & 1.000 & 1.002 & 1.006 & 1.013 & 1.013 & 1.020 & 1.030 & 1.064 & 1.082 \\
\bwa & 1.000 & 1.002 & 1.006 & 1.016 & 1.017 & 1.024 & 1.036 & 1.070 & 1.089 \\
\wmm & 1.000 & 1.001 & 1.005 & 1.013 & 1.014 & 1.019 & 1.031 & 1.065 & 1.088 \\
\WBSstar & 1.000 & 1.003 & 1.008 & 1.018 & 1.019 & 1.027 & 1.039 & 1.074 & 1.093 \\
\bWae$\frac{1}{W^2}$ & 1.000 & \textbf{1.000} & \textbf{1.001} & 1.009 & 1.013 & 1.019 & 1.034 & 1.068 & 1.091 \\
\bWae$\frac{1}{W}$ & 1.000 & \textbf{1.000} & \textbf{1.001} & 1.008 & 1.010 & 1.015 & 1.032 & 1.065 & 1.091 \\
\bWae$1$ & 1.000 & 1.001 & \textbf{1.001} & \textbf{1.007} & 1.009 & 1.011 & 1.023 & 1.043 & 1.069 \\
\bWae$W$ & 1.000 & 1.001 & \textbf{1.001} & \textbf{1.007} & \textbf{1.008} & \textbf{1.009} & \textbf{1.011} & \textbf{1.015} & \textbf{1.018} \\
\bWae$\lambda^*$ & 1.000 & 1.001 & \textbf{1.001} & 1.008 & 1.009 & 1.011 & 1.025 & 1.052 & 1.091 \\

\bottomrule
\end{tabular}
\end{table}

\begin{table}[t]
\caption{Average solution quality on 18 Pancakes domain, with GAP to GAP-2 heuristics.}
\centering
\begin{adjustbox}{width=0.95\textwidth,center}
\begin{tabular}{@{}l|rrrrrrrrr|rrrrrrrrr|rrrrrrrrr@{}}
\toprule
\multicolumn{1}{c}{} & \multicolumn{9}{c}{GAP} & \multicolumn{9}{c}{GAP-1} & \multicolumn{9}{c}{GAP-2} \\ \midrule
\multicolumn{1}{c|}{Algorithm} & \multicolumn{1}{c}{1} & \multicolumn{1}{c}{1.1} & \multicolumn{1}{c}{1.2} & \multicolumn{1}{c}{1.5} & \multicolumn{1}{c}{1.7} & \multicolumn{1}{c}{2} & \multicolumn{1}{c}{3} & \multicolumn{1}{c}{5} & \multicolumn{1}{c|}{10} & \multicolumn{1}{c}{1} & \multicolumn{1}{c}{1.1} & \multicolumn{1}{c}{1.2} & \multicolumn{1}{c}{1.5} & \multicolumn{1}{c}{1.7} & \multicolumn{1}{c}{2} & \multicolumn{1}{c}{3} & \multicolumn{1}{c}{5} & \multicolumn{1}{c|}{10} & \multicolumn{1}{c}{1} & \multicolumn{1}{c}{1.1} & \multicolumn{1}{c}{1.2} & \multicolumn{1}{c}{1.5} & \multicolumn{1}{c}{1.7} & \multicolumn{1}{c}{2} & \multicolumn{1}{c}{3} & \multicolumn{1}{c}{5} & \multicolumn{1}{c}{10} \\
\cmidrule(lr){2-10} \cmidrule(lr){11-19} \cmidrule(lr){20-28}
\wastar & 1.000 & \textbf{1.014} & 1.043 & 1.098 & 1.105 & 1.141 & 1.150 & 1.150 & 1.150 & 1.000 & 1.002 & \textbf{1.020} & 1.082 & 1.094 & 1.150 & 1.185 & 1.199 & 1.199 & 1.000 & 1.002 & \textbf{1.011} & \textbf{1.058} & 1.081 & 1.126 & 1.171 & 1.208 & 1.218 \\
\bwa & 1.000 & 1.017 & 1.039 & 1.073 & 1.077 & 1.103 & 1.103 & 1.103 & 1.103 & 1.000 & 1.002 & 1.029 & 1.082 & 1.093 & 1.134 & 1.145 & 1.146 & 1.146 & 1.000 & 1.002 & \textbf{1.011} & 1.071 & 1.093 & 1.134 & 1.178 & 1.202 & 1.198 \\
\wmm & 1.000 & 1.019 & 1.073 & 1.222 & 1.252 & 1.297 & 1.108 & 1.104 & 1.103 & 1.000 & \textbf{1.000} & 1.023 & 1.145 & 1.205 & 1.338 & 1.198 & \textbf{1.131} & \textbf{1.141} & 1.000 & \textbf{1.001} & 1.013 & 1.088 & 1.159 & 1.285 & 1.454 & 1.162 & 1.169 \\
\WBSstar & 1.000 & 1.017 & 1.040 & 1.075 & 1.077 & 1.103 & 1.103 & 1.103 & 1.103 & 1.000 & 1.002 & 1.031 & 1.087 & \textbf{1.095} & 1.138 & 1.145 & 1.146 & 1.146 & 1.000 & 1.002 & 1.012 & 1.097 & 1.120 & 1.161 & 1.182 & 1.202 & 1.198 \\
\bWae$\frac{1}{W^2}$ & 1.000 & 1.017 & \textbf{1.036} & 1.064 & 1.067 & 1.087 & 1.094 & \textbf{1.095} & \textbf{1.095} & 1.000 & 1.005 & 1.024 & 1.075 & 1.090 & 1.121 & \textbf{1.142} & 1.152 & 1.152 & 1.000 & 1.002 & 1.015 & 1.070 & 1.093 & 1.132 & 1.171 & 1.193 & 1.199 \\
\bWae$\frac{1}{W}$ & 1.000 & 1.017 & 1.039 & 1.060 & 1.066 & 1.087 & 1.094 & \textbf{1.095} & \textbf{1.095} & 1.000 & 1.006 & 1.025 & 1.066 & 1.084 & \textbf{1.116} & \textbf{1.142} & 1.152 & 1.152 & 1.000 & 1.003 & 1.017 & 1.069 & 1.080 & 1.122 & 1.170 & 1.193 & 1.199 \\
\bWae$1$ & 1.000 & 1.017 & 1.039 & 1.063 & 1.067 & 1.087 & 1.094 & 1.099 & 1.099 & 1.000 & 1.007 & 1.026 & \textbf{1.065} & \textbf{1.079} & 1.114 & 1.143 & 1.150 & 1.154 & 1.000 & 1.004 & 1.019 & 1.073 & \textbf{1.077} & 1.107 & 1.150 & 1.183 & 1.205 \\
\bWae$W$ & 1.000 & 1.017 & 1.039 & \textbf{1.059} & \textbf{1.062} & \textbf{1.082} & \textbf{1.090} & \textbf{1.095} & 1.097 & 1.000 & 1.012 & 1.037 & 1.094 & 1.106 & 1.126 & 1.158 & 1.173 & 1.180 & 1.000 & 1.009 & 1.026 & 1.063 & 1.084 & \textbf{1.101} & \textbf{1.134} & \textbf{1.156} & \textbf{1.172} \\
\bWae$\lambda^*$ & 1.000 & 1.017 & 1.037 & 1.065 & 1.071 & 1.087 & 1.094 & 1.098 & 1.099 & 1.000 & 1.005 & 1.026 & 1.075 & 1.091 & \textbf{1.116} & \textbf{1.142} & 1.152 & 1.152 & 1.000 & 1.008 & 1.017 & 1.069 & 1.094 & 1.132 & 1.163 & 1.188 & 1.184 \\

\bottomrule
\end{tabular}
\end{adjustbox}
\end{table}

\begin{table}[t]
\caption{Average solution quality on 12-disks ToH domain, with $(10+2)$, $(8+4)$, and $(6+6)$ PDBs.}
\centering
\begin{adjustbox}{width=0.95\textwidth,center}
\begin{tabular}{@{}l|rrrrrrrrr|rrrrrrrrr|rrrrrrrrr@{}}
\toprule
\multicolumn{1}{c}{} & \multicolumn{9}{c}{ToH-12 (10+2)} & \multicolumn{9}{c}{ToH-12 (8+4)} & \multicolumn{9}{c}{ToH-12 (6+6)} \\ \midrule
\multicolumn{1}{c|}{Algorithm} & \multicolumn{1}{c}{1} & \multicolumn{1}{c}{1.1} & \multicolumn{1}{c}{1.2} & \multicolumn{1}{c}{1.5} & \multicolumn{1}{c}{1.7} & \multicolumn{1}{c}{2} & \multicolumn{1}{c}{3} & \multicolumn{1}{c}{5} & \multicolumn{1}{c|}{10} & \multicolumn{1}{c}{1} & \multicolumn{1}{c}{1.1} & \multicolumn{1}{c}{1.2} & \multicolumn{1}{c}{1.5} & \multicolumn{1}{c}{1.7} & \multicolumn{1}{c}{2} & \multicolumn{1}{c}{3} & \multicolumn{1}{c}{5} & \multicolumn{1}{c|}{10} & \multicolumn{1}{c}{1} & \multicolumn{1}{c}{1.1} & \multicolumn{1}{c}{1.2} & \multicolumn{1}{c}{1.5} & \multicolumn{1}{c}{1.7} & \multicolumn{1}{c}{2} & \multicolumn{1}{c}{3} & \multicolumn{1}{c}{5} & \multicolumn{1}{c}{10} \\
\cmidrule(lr){2-10} \cmidrule(lr){11-19} \cmidrule(lr){20-28}
\wastar & 1.000 & \textbf{1.000} & \textbf{1.001} & 1.025 & 1.044 & 1.247 & 1.444 & 1.503 & 1.508 & 1.000 & \textbf{1.000} & \textbf{1.000} & 1.014 & 1.045 & 1.165 & 1.285 & 1.501 & 1.954 & 1.000 & \textbf{1.000} & \textbf{1.001} & 1.010 & 1.014 & 1.074 & 1.249 & 1.510 & 1.881 \\
\bwa & 1.000 & \textbf{1.000} & 1.002 & 1.051 & 1.106 & 1.248 & 1.431 & 1.542 & 1.560 & 1.000 & \textbf{1.000} & 1.001 & 1.027 & 1.075 & 1.159 & 1.378 & 1.711 & 2.057 & 1.000 & \textbf{1.000} & 1.005 & 1.039 & 1.063 & 1.129 & 1.365 & 1.611 & 2.017 \\
\wmm & 1.000 & \textbf{1.000} & \textbf{1.001} & 1.032 & 1.078 & 1.235 & 1.438 & 1.551 & 1.543 & 1.000 & \textbf{1.000} & \textbf{1.000} & 1.020 & 1.055 & 1.161 & 1.352 & 1.653 & 2.043 & 1.000 & \textbf{1.000} & \textbf{1.001} & 1.031 & 1.055 & 1.134 & 1.362 & 1.532 & 1.937 \\
\WBSstar & 1.000 & \textbf{1.000} & 1.003 & 1.050 & 1.102 & 1.246 & 1.435 & 1.544 & 1.563 & 1.000 & \textbf{1.000} & 1.002 & 1.041 & 1.082 & 1.164 & 1.375 & 1.725 & 2.061 & 1.000 & 1.001 & 1.008 & 1.048 & 1.072 & 1.140 & 1.380 & 1.623 & 2.033 \\
\bWae$\frac{1}{W^2}$ & 1.000 & 1.004 & 1.005 & 1.027 & 1.050 & 1.135 & 1.333 & 1.508 & 1.527 & 1.000 & \textbf{1.000} & 1.004 & 1.032 & 1.073 & 1.150 & 1.309 & 1.664 & 2.027 & 1.000 & 1.001 & 1.003 & 1.031 & 1.063 & 1.126 & 1.334 & 1.570 & 1.984 \\
\bWae$\frac{1}{W}$ & 1.000 & 1.004 & 1.007 & 1.019 & 1.035 & 1.090 & 1.298 & 1.516 & 1.527 & 1.000 & 1.001 & 1.005 & 1.033 & 1.069 & 1.144 & 1.283 & 1.586 & 2.025 & 1.000 & 1.001 & 1.005 & 1.031 & 1.064 & 1.131 & 1.284 & 1.532 & 1.940 \\
\bWae$1$ & 1.000 & 1.004 & 1.008 & 1.015 & 1.022 & 1.049 & 1.191 & 1.379 & 1.565 & 1.000 & 1.002 & 1.006 & 1.017 & 1.028 & 1.067 & 1.196 & 1.331 & 1.755 & 1.000 & 1.003 & 1.004 & 1.012 & 1.021 & 1.050 & 1.173 & 1.345 & 1.601 \\
\bWae$W$ & 1.000 & 1.004 & 1.008 & \textbf{1.010} & \textbf{1.013} & \textbf{1.022} & \textbf{1.044} & \textbf{1.078} & \textbf{1.105} & 1.000 & 1.003 & 1.004 & \textbf{1.005} & \textbf{1.007} & \textbf{1.013} & \textbf{1.022} & \textbf{1.037} & \textbf{1.072} & 1.000 & 1.002 & 1.002 & \textbf{1.003} & \textbf{1.004} & \textbf{1.004} & \textbf{1.008} & \textbf{1.015} & \textbf{1.018} \\
\bWae$\lambda^*$ & 1.000 & 1.004 & 1.005 & 1.039 & 1.087 & 1.189 & 1.333 & 1.474 & 1.527 & 1.000 & 1.003 & 1.006 & 1.033 & 1.076 & 1.143 & 1.318 & 1.664 & 2.026 & 1.000 & 1.002 & 1.002 & 1.030 & 1.064 & 1.132 & 1.330 & 1.576 & 1.834 \\

\bottomrule
\end{tabular}
\end{adjustbox}
\end{table}


\FloatBarrier

\section*{Appendix D: Heavy Sliding Tile Puzzle}
Following are tables for node expansions, solution quality, and runtime for heavy STP, i.e., where the cost of moving a tile is its number. The heuristic is heavy Manhattan Distance, which is the same as regular MD, but accounts for the different weighting of moves.

\begin{table}[ht]
\caption{Average number of node expansions on Heavy STP domain.}
\centering
\begin{tabular}{lrrrrrrrrr}
\toprule
\multicolumn{1}{c}{Algorithm} & \multicolumn{1}{c}{1} & \multicolumn{1}{c}{1.1} & \multicolumn{1}{c}{1.2} & \multicolumn{1}{c}{1.5} & \multicolumn{1}{c}{1.7} & \multicolumn{1}{c}{2} & \multicolumn{1}{c}{3} & \multicolumn{1}{c}{5} & \multicolumn{1}{c}{10} \\
\midrule
\wastar & 25M  & 8M  & 3M  & 336K  & 212K  & 115K  & 58K  & 53K  & 45K \\
\bwa & 41M  & 6M  & 1M  & \textbf{174K}  & \textbf{112K}  & 73K  & \textbf{35K}  & \textbf{15K}  & \textbf{9K} \\
\wmm & 39M  & 7M  & 2M  & 566K  & 689K  & 1M  & N/A  & 13M  & 159K \\
\WBSstar & 25M  & 4M  & 964K  & 179K  & 108K  & 74K  & 36K  & 16K  & \textbf{9K} \\
\bWae$\frac{1}{W^2}$ & \textbf{5M}  & \textbf{2M}  & 836K  & 286K  & 149K  & 88K  & 42K  & 17K  & \textbf{9K} \\
\bWae$\frac{1}{W}$ & \textbf{5M}  & \textbf{2M}  & 899K  & 421K  & 232K  & 118K  & 46K  & 19K  & 10K \\
\bWae$1$ & \textbf{5M}  & \textbf{2M}  & 973K  & 622K  & 463K  & 331K  & 81K  & 43K  & 15K \\
\bWae$W$ &  \textbf{5M}  & \textbf{2M}  & 1M  & 949K  & 984K  & 1M  & 1M  & 2M  & 2M \\
\bWae$\lambda^*$ & \textbf{5M} & \textbf{2M} & \textbf{740K} & 213K & 123K & \textbf{72K} & 36K & 19K & \textbf{9K} \\

\bottomrule
\end{tabular}
\end{table}

\begin{table}[ht]
\caption{Average solution quality on Heavy STP domain.}
\centering
\begin{tabular}{lrrrrrrrrr}
\toprule
\multicolumn{1}{c}{Algorithm} & \multicolumn{1}{c}{1} & \multicolumn{1}{c}{1.1} & \multicolumn{1}{c}{1.2} & \multicolumn{1}{c}{1.5} & \multicolumn{1}{c}{1.7} & \multicolumn{1}{c}{2} & \multicolumn{1}{c}{3} & \multicolumn{1}{c}{5} & \multicolumn{1}{c}{10} \\
\midrule
\wastar & 1.000 & 1.007 & 1.020 & 1.095 & 1.157 & 1.259 & 1.532 & 1.878 & 2.305 \\
\bwa & 1.000 & \textbf{1.006} & 1.024 & 1.106 & 1.164 & 1.261 & 1.517 & 1.892 & 2.299 \\
\wmm & 1.000 & 1.008 & 1.031 & 1.126 & 1.205 & 1.335 & N/A & 1.463 & 1.901 \\
\WBSstar & 1.000 & 1.008 & 1.029 & 1.107 & 1.166 & 1.266 & 1.517 & 1.890 & 2.300 \\
\bWae$\frac{1}{W^2}$ & 1.000 & \textbf{1.006} & 1.026 & 1.086 & 1.118 & 1.201 & 1.479 & 1.868 & 2.294 \\
\bWae$\frac{1}{W}$ & 1.000 & 1.008 & 1.025 & 1.064 & 1.099 & 1.164 & 1.404 & 1.775 & 2.241 \\
\bWae$1$ & 1.000 & 1.009 & \textbf{1.019} & 1.048 & 1.064 & 1.092 & 1.258 & 1.535 & 1.910 \\
\bWae$W$ & 1.000 & 1.010 & 1.022 & \textbf{1.037} & \textbf{1.044} & \textbf{1.057} & \textbf{1.093} & \textbf{1.149} & \textbf{1.202} \\
\bWae$\lambda^*$ & 1.000 & 1.008 & 1.024 & 1.094 & 1.160 & 1.256 & 1.512 & 1.811 & 2.300 \\

\bottomrule
\end{tabular}
\end{table}

\begin{table}[ht]
\caption{Average runtime in seconds on Heavy STP domain.}
\centering
\begin{tabular}{lrrrrrrrrr}
\toprule
\multicolumn{1}{c}{Algorithm} & \multicolumn{1}{c}{1} & \multicolumn{1}{c}{1.1} & \multicolumn{1}{c}{1.2} & \multicolumn{1}{c}{1.5} & \multicolumn{1}{c}{1.7} & \multicolumn{1}{c}{2} & \multicolumn{1}{c}{3} & \multicolumn{1}{c}{5} & \multicolumn{1}{c}{10} \\
\midrule
\wastar & 243 & 46.40 & 8.98 & 0.91 & 0.59 & 0.26 & 0.12 & 0.11 & 0.09 \\
\bwa & 200 & 31.97 & 4.33 & \textbf{0.50} & \textbf{0.31} & 0.19 & 0.08 & \textbf{0.03} & \textbf{0.02} \\
\wmm & 5,977 & 831 & 317 & 18.57 & 13.68 & 22.94 & N/A & 104 & 0.57 \\
\WBSstar & 95.62 & 15.02 & \textbf{3.04} & 0.60 & 0.37 & 0.23 & 0.09 & \textbf{0.03} & \textbf{0.02} \\
\bWae$\frac{1}{W^2}$ & 47.23 & 6.68 & 3.37 & 0.95 & 0.45 & 0.24 & 0.10 & \textbf{0.03} & \textbf{0.02} \\
\bWae$\frac{1}{W}$ & \textbf{21.29} & 7.01 & 3.85 & 1.52 & 1.15 & 0.33 & 0.12 & 0.04 & \textbf{0.02} \\
\bWae$1$ & 57.58 & 6.54 & 3.95 & 2.35 & 1.81 & 1.32 & 0.23 & 0.11 & 0.03 \\
\bWae$W$ & 54.78 & 7.36 & 4.57 & 3.87 & 4.03 & 5.23 & 5.49 & 9.04 & 11.14 \\
\bWae$\lambda^*$ & 22.24 & \textbf{5.37} & 3.10 & 0.81 & \textbf{0.31} & \textbf{0.17} & \textbf{0.07} & 0.04 & \textbf{0.02} \\

\bottomrule
\end{tabular}
\end{table}


\FloatBarrier

\clearpage
\section*{Appendix E: Sliding Tile Puzzle with MD-4}
Following are tables for node expansions, solution quality, and runtime for STP which utilizes Manhattan Distance without the 4 center tiles (MD-4).

\begin{table}[ht]
\caption{Average solution quality on STP domain with MD-4.}
\centering
\begin{tabular}{lrrrrrrrrr}
\toprule
\multicolumn{1}{c}{Algorithm} & \multicolumn{1}{c}{1} & \multicolumn{1}{c}{1.1} & \multicolumn{1}{c}{1.2} & \multicolumn{1}{c}{1.5} & \multicolumn{1}{c}{1.7} & \multicolumn{1}{c}{2} & \multicolumn{1}{c}{3} & \multicolumn{1}{c}{5} & \multicolumn{1}{c}{10} \\
\midrule
\wastar & N/A & N/A & N/A & 12M & 3M & 617K & 46K & \textbf{10K} & \textbf{2K} \\
\bwa & N/A & N/A & N/A & 5M & 743K & 263K & \textbf{44K} & 14K & 4K \\
\wmm & N/A & N/A & N/A & 3M & 1M & 621K & 4M & N/A & 2M \\
\WBSstar & N/A & N/A & N/A & 3M & 676K & \textbf{252K} & 45K & 14K & 4K \\
\bWae$\frac{1}{W^2}$ & \textbf{9M} & 5M & \textbf{3M} & 796K & \textbf{493K} & 322K & 66K & 16K & 4K \\
\bWae$\frac{1}{W}$ & \textbf{9M} & \textbf{4M} & \textbf{3M} & 1M & 678K & 575K & 121K & 22K & 5K \\
\bWae$1$ & \textbf{9M} & \textbf{4M} & \textbf{3M} & 2M & 1M & 849K & 334K & 62K & 13K \\
\bWae$W$ & \textbf{9M} & \textbf{4M} & \textbf{3M} & 2M & 2M & 2M & 3M & 4M & 4M \\
\bWae$\lambda^*$ & N/A & N/A & \textbf{3M} & \textbf{776K} & 536K & 342K & 52K & 15K & 5K \\

\bottomrule
\end{tabular}
\end{table}

\begin{table}[ht]
\caption{Average solution quality on STP domain with MD.}
\centering
\begin{tabular}{lrrrrrrrrr}
\toprule
\multicolumn{1}{c}{Algorithm} & \multicolumn{1}{c}{1} & \multicolumn{1}{c}{1.1} & \multicolumn{1}{c}{1.2} & \multicolumn{1}{c}{1.5} & \multicolumn{1}{c}{1.7} & \multicolumn{1}{c}{2} & \multicolumn{1}{c}{3} & \multicolumn{1}{c}{5} & \multicolumn{1}{c}{10} \\
\midrule
\wastar & N/A & N/A & N/A & 1.014 & 1.038 & 1.101 & 1.325 & 1.639 & 2.017 \\
\bwa & N/A & N/A & N/A & 1.014 & 1.046 & 1.116 & 1.348 & 1.674 & 2.019 \\
\wmm & N/A & N/A & N/A & 1.015 & 1.053 & 1.142 & 1.456 & N/A & 1.593 \\
\WBSstar & N/A & N/A & N/A & 1.017 & 1.048 & 1.122 & 1.349 & 1.674 & 2.015 \\
\bWae$\frac{1}{W^2}$ & 1.000 & \textbf{1.000} & \textbf{1.000} & 1.028 & 1.049 & 1.092 & 1.291 & 1.625 & 2.005 \\
\bWae$\frac{1}{W}$ & 1.000 & \textbf{1.000} & 1.004 & 1.023 & 1.041 & 1.073 & 1.230 & 1.565 & 1.983 \\
\bWae$1$ & 1.000 & 1.001 & 1.007 & 1.015 & 1.023 & 1.041 & 1.120 & 1.331 & 1.664 \\
\bWae$W$ & 1.000 & 1.003 & 1.006 & \textbf{1.008} & \textbf{1.009} & \textbf{1.015} & \textbf{1.030} & \textbf{1.046} & \textbf{1.064} \\
\bWae$\lambda^*$ & N/A & N/A & 1.005 & 1.030 & 1.048 & 1.104 & 1.311 & 1.624 & 2.001 \\

\bottomrule
\end{tabular}
\end{table}

\begin{table}[ht]
\caption{Average runtime in seconds on STP domain with MD-4 (in seconds).}
\centering
\begin{tabular}{lrrrrrrrrr}
\toprule
\multicolumn{1}{c}{Algorithm} & \multicolumn{1}{c}{1} & \multicolumn{1}{c}{1.1} & \multicolumn{1}{c}{1.2} & \multicolumn{1}{c}{1.5} & \multicolumn{1}{c}{1.7} & \multicolumn{1}{c}{2} & \multicolumn{1}{c}{3} & \multicolumn{1}{c}{5} & \multicolumn{1}{c}{10} \\
\midrule
\wastar & N/A & N/A & N/A & 131.24 & 9.56 & 1.64 & \textbf{0.10} & \textbf{0.02} & \textbf{0.01} \\
\bwa & N/A & N/A & N/A & 19.50 & 2.57 & 0.80 & 0.11 & 0.03 & \textbf{0.01} \\
\wmm & N/A & N/A & N/A & 12.14 & 3.59 & 1.91 & 13.24 & N/A & 6.41 \\
\WBSstar & N/A & N/A & N/A & 11.68 & 2.28 & \textbf{0.78} & 0.13 & 0.03 & \textbf{0.01} \\
\bWae$\frac{1}{W^2}$ & \textbf{40.25} & 25.40 & 16.95 & 3.17 & \textbf{1.69} & 1.07 & 0.18 & 0.04 & \textbf{0.01} \\
\bWae$\frac{1}{W}$ & 45.19 & 20.58 & 14.42 & 4.12 & 2.52 & 2.05 & 0.36 & 0.05 & \textbf{0.01} \\
\bWae$1$ & 46.08 & \textbf{17.47} & 15.16 & 6.45 & 5.18 & 3.13 & 1.14 & 0.17 & 0.03 \\
\bWae$W$ & 45.30 & 17.59 & 17.69 & 10.43 & 9.55 & 9.03 & 11.93 & 16.06 & 17.68 \\
\bWae$\lambda^*$ & N/A & N/A & \textbf{9.91} & \textbf{2.78} & 1.89 & 1.17 & 0.11 & 0.03 & \textbf{0.01} \\
\bottomrule
\end{tabular}
\end{table}
\newpage
\FloatBarrier
\section{Appendix F: Tuned values of $\lambda$}

In this section, we report the tuned \(\lambda\) values, denoted \(\lambda^*\), obtained after 50 trials of optimization. These values do not necessarily yield the optimal results, but reflect the best performance found within the trial budget. Table~\ref{tab:lambda-values} lists the tuned $\lambda$ values used in \bWae$\lambda^*$ for each domain and heuristic.
In some cases, \(\lambda^*\) may appear unexpectedly high or low, typically in domains where performance is relatively insensitive to \(W\). For example, in Pancake with GAP-2 and \(W = 10\), the tuned value \(\lambda^* = 2.03\) is relatively high. However, Table~4 shows that this choice led to 201 node expansions, compared to 180 for both \(\lambda = \frac{1}{W} = 0.1\) and \(\lambda = \frac{1}{W^2} = 0.01\), indicating that \(\lambda^*\) was slightly suboptimal, though the performance differences are minor. Despite such anomalies, we observe an overall trend: \(\lambda^*\) tends to decrease as \(W\) increases and as the heuristic becomes weaker.

\begin{table}[tbh]
\centering
\caption{Tuned $\lambda$ values used in \bWae$\lambda^*$}
\label{tab:lambda-values}
\begin{tabular}{llccccccccc}
\toprule
Domain & Heuristic & 1 & 1.1 & 1.2 & 1.5 & 1.7 & 2 & 3 & 5 & 10 \\
\midrule
STP & MD & 0.9366 & 0.7220 & 0.3949 & 0.0296 & 0.0311 & 0.1401 & 0.0332 & 0.0296 & 0.0772 \\
STP & Heavy MD & 0.9877 & 0.9211 & 0.4815 & 0.1564 & 0.0097 & 0.0085 & 0.0409 & 0.1280 & 0.0186 \\
STP & MD-4 & N/A & N/A & 0.8489 & 0.4945 & 0.4360 & 0.0816 & 0.0254 & 0.0224 & 0.0616
 \\
Mazes & Octile & 0.9995 & 0.9930 & 0.9827 & 0.9513 & 0.9312 & 0.8953 & 0.7732 & 0.4026 & 0.0008 \\
DAO & Octile & 0.0005 & 0.0019 & 0.0005 & 0.0004 & 0.0002 & 0.0018 & 0.0004 & 0.0029 & 0.0012 \\
\midrule
Pancake & GAP & 0.2025 & 0.6162 & 0.0233 & 0.0376 & 0.1567 & 0.2550 & 0.1521 & 1.3286 & 1.5546 \\
Pancake & GAP-1 & 0.9797 & 0.8970 & 0.5836 & 0.4962 & 0.3863 & 0.4918 & 0.3210 & 0.2358 & 0.0564 \\
Pancake & GAP-2 & 0.8535 & 1.0954 & 0.7973 & 0.3249 & 0.1887 & 0.2383 & 0.4148 & 0.3229 & 2.0255 \\
\midrule
TOH & PDB (10+2) & 0.9933 & 0.9992 & 0.6956 & 0.0390 & 0.0412 & 0.0611 & 0.1104 & 0.2707 & 0.1908 \\
TOH & PDB (8+4) & 0.9906 & 1.0982 & 0.9314 & 0.6608 & 0.3981 & 0.1650 & 0.0345 & 0.0156 & 0.0397 \\
TOH & PDB (6+6) & 0.9878 & 1.0969 & 1.1849 & 0.6977 & 0.5985 & 0.3313 & 0.1547 & 0.0355 & 0.3135 \\
\bottomrule
\end{tabular}
\end{table}

\end{document}